\documentclass[hidelinks,onefignum,onetabnum]{siamart251216}

\usepackage{lipsum}
\usepackage{amsfonts}
\usepackage{graphicx}
\usepackage{epstopdf}
\usepackage{algorithmic}
\ifpdf
  \DeclareGraphicsExtensions{.eps,.pdf,.png,.jpg}
\else
  \DeclareGraphicsExtensions{.eps}
\fi

\newsiamremark{remark}{Remark}
\newsiamremark{hypothesis}{Hypothesis}
\crefname{hypothesis}{Hypothesis}{Hypotheses}
\newsiamthm{claim}{Claim}
\newsiamremark{fact}{Fact}
\crefname{fact}{Fact}{Facts}

\headers{Gaussian Transformer Dynamics}{A. Alcalde, Z. Ji, and E. Zuazua}

\title{Reachability and Asymptotics of Gaussian Transformer Dynamics\thanks{Submitted to the editors DATE.
\funding{AA was funded by the European Union's Horizon Europe MSCA project ModConFlex (grant number 101073558). EZ was funded by the Alexander von Humboldt-Professorship program, the ERC Advanced Grant CoDeFeL, the Grants PID2020-112617GB-C22 KiLearn and TED2021-131390B-I00-DasEl of MINECO and PID2023-146872OB-I00-DyCMaMod of MICIU (Spain),  the European Union's Horizon Europe MSCA project ModConFlex (grant number 101073558), the Transregio 154 Project ``Mathematical Modelling, Simulation and Optimization Using the Example of Gas Networks'' of the DFG, the AFOSR 24IOE027 project, and the SURE-AI Centre grant 357482, Research Council of Norway.}}}

\author{Albert Alcalde\thanks{Chair for Dynamics, Control, Machine Learning \& Numerics (Alexander von Humboldt Professorship), Department of Mathematics, Friedrich--Alexander-Universit\"at Erlangen--N\"urnberg, 91058 Erlangen, Germany. (\email{albert.alcalde@fau.de}, \email{zhengping.ji@fau.de}, \email{enrique.zuazua@fau.de})}
\and Zhengping Ji\footnotemark[2]
\and Enrique Zuazua\footnotemark[2]
\thanks{Departamento de Matem\'{a}ticas,
Universidad Aut\'{o}noma de Madrid,
28049 Madrid, Spain.
\\
\hspace*{\parindent}\phantom{\textsuperscript{$\dagger$}} Chair of Computational Mathematics, Fundaci\'{o}n Deusto. Av. de las Universidades, 24, 48007 Bilbao, Basque Country, Spain.
}
}

\usepackage{amsopn}

\usepackage{subcaption}

\usepackage{booktabs}
\usepackage{tabularx}
\usepackage{mathtools}
\usepackage{makecell}

\newcommand{\R}{\mathbb{R}}

\newcommand{\E}{\mathbb{E}}

\newcommand{\dd}{\mathrm{d}}
\DeclareMathOperator{\spec}{spec}
\DeclareMathOperator{\tr}{tr}
\DeclareMathOperator{\rank}{rank}

\ifpdf
\hypersetup{
  pdftitle={Reachability and Asymptotics of Gaussian Transformer Dynamics},
  pdfauthor={A. Alcalde, Z. Ji, and E. Zuazua}
}
\fi

\begin{document}

\maketitle

% REQUIRED
\begin{abstract}
We formulate data propagation through the Transformer, the machine learning architecture powering large language models, as a nonlinear control system on the space of probability measures. For the mean-field Transformer model with self-attention and affine feed-forward layers, we prove that Gaussian distributions remain exactly Gaussian along the induced flow. This invariance reduces the infinite-dimensional measure dynamics to a finite-dimensional bilinear control system governing the evolution of the mean and covariance, reformulates the expressive capacity of Transformers as a reachability problem for prescribed Gaussian moments, and reveals a novel connection with Riccati-type equations from classical filtering and control.

For time-varying controls, we prove exact finite-time reachability of any target Gaussian distribution whose covariance matrix has the same rank as the initial one, this rank constraint being an intrinsic invariant of the dynamics. For time-invariant parameters, we derive explicit spectral conditions leading either to asymptotic stability toward positive-definite equilibria or to finite-time blow-up of the covariance.

Numerical experiments complement the theory by showing that practical Transformers with Gaussian inputs remain close to moment-matched Gaussian distributions through early and intermediate layers, while Transformers with prescribed attention matrices reproduce the predicted covariance regimes: bounded evolution in stabilizing configurations and blow-up in destabilizing ones.

\end{abstract}

% REQUIRED
\begin{keywords}
deep learning, mean-field transformers, self-attention, Riccati differential equations, covariance control
\end{keywords}

% REQUIRED
\begin{MSCcodes}
68T07, 93B03, 93D20
\end{MSCcodes}

\section{Introduction}

Transformers \cite{bahdanau2014neural, vaswani2017attention} have become the dominant architecture in modern machine learning, achieving state-of-the-art performance in natural language processing \cite{achiam2023gpt4, devlin2019bert}, computer vision \cite{carion2020end, liu2021swin}, genomics \cite{abramson2024accurate, jumper2021highly}, and scientific machine learning \cite{bodnar2025foundation, price2025probabilistic}. Despite their empirical success, a rigorous mathematical framework characterizing exactly when and how Transformers reliably represent and propagate information remains elusive, which has motivated a growing theoretical effort to understand them through continuous-depth and mean-field limits, using tools from nonlinear control \cite{bruno2025emergence, burger2025analysis, castin2025unified, geshkovski2025mathematical, sander2022sinkformers}. When the number of layers and inputs becomes large, the Transformer model can be viewed as a controlled nonlinear flow acting on probability measures~\cite{peyre2025optimal}, in which the controlled state space is the density of the distribution of the inputs, the layer index is interpreted as a continuous time variable, and the layer-varying parameters of the Transformer serve as controls. 

A natural and analytically tractable setting for studying this measure flow is the invariant manifold of Gaussian input distributions \cite{castin2025unified}. 
This setting is widely adopted in modeling independently sampled data (see \Cref{ss:motivation}) and, crucially, the Gaussian invariant manifold reduces the infinite-dimensional controlled evolution characterizing the information-propagation properties of Transformers into a finite-dimensional control system for the mean and covariance (see \Cref{ss:GaussianTransformer}). The Gaussian framework provides a rigorous control-theoretic paradigm to investigate the training dynamics and expressivity of Transformers. Specifically, the approximation capacity of a Transformer translates directly into a reachability problem: does there exist a sequence of time-varying controls (weight parameters) capable of steering an initial Gaussian measure along the nonlinear flow to match an arbitrary target Gaussian? Furthermore, understanding the forward-pass dynamics naturally raises fundamental questions of asymptotic behavior: under what control parameter regimes does the Transformer flow stabilize to a stationary distribution, and when does it diverge? In the language of control theory, these properties correspond precisely to the system's controllability and stability \cite{tabuada2022universal}.

In this paper, we address the key questions of reachability and asymptotic behavior of the nonlinear control system arising from the mean-field Transformer model in the Gaussian setting. The Transformer architecture poses distinct control-theoretic challenges: (i) due to the strong nonlinear nature of self-attention, the well-posedness of the mean-field evolution is not guaranteed when the initial measure is not of compact support; (ii) the Gaussian reduction leads to coupled nonlinear dynamics of the mean and covariance, complicating the use of classical geometric control techniques \cite{bianchini2003needle}; (iii) the non-commutativity of the control matrices arising from the self-attention mechanism presents significant obstacles to establishing the existence of equilibria, and the nonlinearities in the covariance flow complicate global stability analysis. We overcome these difficulties by resolvent techniques and perturbation analysis, explicitly bridging the  Transformer dynamics with classical Riccati theory.

\subsection{Our contributions}

The main contributions of this paper are as follows:
\begin{itemize}
    \item We study the mean-field Transformer model with affine feed-forward layers and self-attention, and prove that the class of Gaussian measures remains invariant under the resulting system. This yields a Riccati-type ODE system for the evolution of mean and covariance (\Cref{prop:gaussianTransformer}), resembling a classical formulation in optimal filtering and control theory: 
    \begin{align}\label{eq:GaussianTransPDEcompact}
        \begin{cases}
            \dot\mu    &= (A + V)\mu + V\Sigma B\mu + b, \\
            \dot\Sigma &= A\Sigma + \Sigma A^\top + V\Sigma B\Sigma + \Sigma B^\top \Sigma V^\top,
        \end{cases}
    \end{align}
    where $\mu(t) \in \R^d$ is the mean and $\Sigma(t)\in\R^{d\times d}$ is the covariance of the Gaussian at time $t$,  while $A(t), B(t), V(t) \in\R^{d\times d}$ and $b(t)\in\R^d$ are trainable parameters acting as controls. Further, when the feed-forward is constructed using ReLU activation, we derive quantitative estimates on the discrepancy between measure flows and the Gaussian evolution \eqref{eq:GaussianTransPDEcompact} (\Cref{prop:GaussianPreserv}). The argument is not specific to ReLU and indicates how analogous estimates may be obtained for more general Lipschitz activations.
    
    \item For time-varying parameter matrices, we show that the rank of the covariance matrix $\Sigma$ is preserved along the flow of \eqref{eq:GaussianTransPDEcompact} (\Cref{lem:RankPreserv}). Leveraging matrix congruence transformations, we construct explicit time-varying control paths that achieve exact finite-time reachability of any target Gaussian state sharing this initial covariance rank (\Cref{thm:finiteTimeReachability}).

    \item For time-invariant parameters, we characterize the long-time behavior of the mean/covariance dynamics \eqref{eq:GaussianTransPDEcompact}. For stabilizing parameter regimes, we show existence of positive-definite equilibria and derive sufficient conditions for local stability  (\Cref{thm:SigmaConvergenceVIdentity,thm:sigmaConvGeneralV}). Conversely, under destabilizing parameter configurations, the covariance blows up in finite time (\Cref{prop:SigmaBlowUp}). Moreover, one can asymptotically match an arbitrary target mean by  static choices of parameters under stabilizing conditions (\Cref{thm:meanMatching}). Interestingly, the resulting stability conditions are consistent with empirical observations in pretrained vision Transformers \cite{trockman2023mimetic}.

    \item We perform numerical experiments serving two complementary purposes beyond the exact Gaussian theory. First, we show that pretrained Transformers, although outside the assumptions of the affine mean-field model, preserve an approximately Gaussian moment structure over early and intermediate layers when initialized with Gaussian inputs. Second, we test the robustness of the covariance predictions in more realistic discrete architectures with nonlinear feed-forward blocks, showing bounded covariance growth in stabilizing sign configurations and blow-up in destabilizing ones.
\end{itemize}
Taken together, these results provide a rigorous framework for understanding data propagation in trained Transformers --- traditionally studied via static approximation theory --- from a dynamical and control-theoretic perspective.

\subsection{Motivating the Gaussian setting}\label{ss:motivation}  

The restriction to Gaussian input distributions is not merely an analytical convenience, but also a standard paradigm in theoretical studies of the in-context learning (ICL) capabilities of Transformers \cite{garg2022can,goel2026training,von2023transformers,zhang2024trained}. Broadly speaking, ICL investigates how Transformers can solve families of supervised learning tasks directly from contextual examples provided at inference time. In this setting, the input to the Transformer typically consists of a sequence $\{(x_1,y_1), (x_2,y_2), \dots, (x_m,y_m), (x_{\text{query}}, 0)\}$,
where each feature $x_i \in \mathbb{R}^d$ (including the query $x_{\text{query}}$) is independently sampled from a Gaussian distribution (often $x_i \sim \mathcal{N}(0,\Sigma)$), and the labels $y_i$ are generated by some unknown task-specific rule (for instance, $y_i = w^\top x_i$ in a linear regression task, with $w \sim \mathcal{N}(0,I_d)$). The Transformer is trained over many such tasks to predict the missing label $y_{\text{query}}$ from the contextual examples. An important aspect of this framework is that Gaussian inputs are not intended as realistic models of data distributions. Rather, they provide an analytically tractable ensemble allowing one to isolate and study the mechanisms by which attention layers aggregate and propagate information.

\subsection{Related work}

Our work builds on recent efforts to understand Transformer architectures through mean-field limits and controlled measure flows. We position our contribution with respect to three closely related directions: asymptotic analyses of attention dynamics, controllability of Transformers, and Riccati-type covariance dynamics.

\paragraph{Asymptotics and mean-field perspective on Transformer dynamics} 
A recurring theme in theoretical studies is that repeated application of attention layers can drive collapse phenomena: inputs cluster and attention matrices can become effectively low-rank, limiting expressivity. At the particle level, the asymptotic dynamics of Transformers only with self-attention layers have been studied in \cite{alcalde2025clustering,11494448, geshkovski2023emergence,pham2025dynamical}. 

These observations are further clarified in the mean-field regime, where the dynamics can be described by partial differential equations on probability measures in $\mathbb{R}^d$. Early contributions identify well-posedness \cite{sander2022sinkformers}, clustering \cite{burger2025analysis,geshkovski2025mathematical}, metastability \cite{alcalde2026quantifying,bruno2025emergence,bruno2025a,geshkovski2024dynamic}, and phase transitions relevant to long-context attention \cite{chen2025critical}. Closest to our work is \cite{castin2025unified}, which proves Gaussian invariance for self-attention-only Transformers and derives corresponding mean and covariance equations. Their asymptotic results rely on a strong commutativity condition between the control matrices and the initial covariance. Our contribution is to move beyond the self-attention-only setting by incorporating affine feed-forward layers, and to analyze the resulting Gaussian system relying on weaker sign assumptions on the control matrices.

\paragraph{Reachability and simultaneous controllability of Transformers} The question of controllability and target reachability is central to Transformers. At the discrete level, a first approximate simultaneous controllability result is proved in \cite{yun2019transformers}, and extended to the exact setting in \cite{alcalde2025exact,kim2023provable}. From a mean-field perspective, the simultaneous controllability of Transformers including normalization layers has been established in \cite{geshkovski2024measure}, while \cite{akman2026optimal} takes an optimal control approach to study the training dynamics of Transformers. Although related, our work takes the different perspective of studying how a target Gaussian can be reached with minimal assumptions on the controls.

\paragraph{Bilinear systems and covariance control} As established in \Cref{prop:gaussianTransformer}, \eqref{eq:GaussianTransPDEcompact} forms a bilinear system, revealing a surprising link between Transformer models and the linear quadratic regulator theory, as the covariance matrix $\Sigma$ can be viewed as a ``feedback gain'' of the mean $\mu$ (see \Cref{ss:VetaId}). It is also related to the optimal estimation problem which aims at designing feedback controllers for the closed-loop system to reach a specified state covariance \cite{hotz1987covariance}. Unlike in optimal control where the Riccati equation is a tool to find optimal gains with guaranteed well-posedness, in our analysis the Riccati/Bernoulli-type equation is part of the state dynamics, the stability of which is under question: the coupled structure \eqref{eq:GaussianTransPDEcompact} is hence fundamentally new due to the specific structure of Transformers, whose behavior is complicated by the inherent lack of commutativity between control parameters and the covariance. Our controllability analysis of $\mu$ and $\Sigma$ based on congruence transformations  avoids the complexity of bracket computation in traditional nonlinear control theory \cite{elliott2009bilinear}.

\subsection{Organization of the paper} 

The remainder of the paper is organized as follows. In \Cref{sec:gaussianTransformer}, we introduce the Gaussian Transformer model. \Cref{sec:finiteTime} is devoted to finite-time reachability of the model, while \Cref{sec:asymptoticDynamics} addresses its asymptotic behavior for time-invariant parameters. In \Cref{s:numerics}, we present our numerical experiments and \Cref{sec:conclusions} concludes the paper, identifying future perspectives.

\section{Transformer dynamics with Gaussian initial conditions}\label{sec:gaussianTransformer} 

We study a class of nonlinear transport equations arising as mean-field limits of Transformer architectures \cite{vaswani2017attention}. Following recent developments at the interface of machine learning, optimal transport, and control theory \cite{castin2025unified, geshkovski2024measure,sander2022sinkformers}, these models describe the evolution of probability measures driven by parameterized, nonlocal vector fields encoding the architecture of the network. We briefly recall the modeling pathway leading to the equations considered in this work, referring to \cite{geshkovski2025mathematical} for detailed derivations from the discrete architecture.

The Transformer is a deep neural network model operating on a finite collection of particles
$\{x_i^\ell\}_{i=1}^n \subset \mathbb R^d$, called \textit{tokens}, which represent, for instance, words in a sentence or pixels in a picture. 
In the infinite-depth limit, their evolution is modeled as a continuous-time interacting particle system governed by
\begin{equation}\label{eq:particle_ode}
\dot x_i(t)
= \sigma(A(t)x_i(t)+b(t))
+ \sum_{j=1}^n
\frac{e^{x_j(t)^\top B(t) x_i(t)}}{\sum_{\ell=1}^n e^{x_\ell(t)^\top B(t) x_i(t)}}
\, V(t)x_j(t).
\end{equation}
In this system, \(\sigma:\mathbb{R}^d \to \mathbb{R}^d\) denotes a nonlinearity applied componentwise known as the \emph{activation function}, while \(A(t), B(t), V(t) \in \mathbb{R}^{d\times d}\) and \(b(t)\in\mathbb{R}^d\) are time-dependent parameters inherited from the trained network. 
Motivated by the increasing context lengths of modern Transformers \cite{press2022train}, we are interested in studying the limit as the number of particles $n\to\infty$. Introducing the empirical measure $\rho_t^n \coloneqq \frac{1}{n}\sum_{i=1}^n \delta_{x_i(t)}$, the right-hand side of \eqref{eq:particle_ode} can be written as a functional of $\rho_t^n$.
As $n\to\infty$, for compactly supported input measures, this system converges (in the sense made precise in \cite{castin2025unified}) to a well-posed continuity equation
\begin{equation}\label{eq:transformerPDE}
\begin{cases}
\partial_t \rho_t + \nabla_x\cdot\left(\rho_t \Gamma[\rho_t](t,x)\right) = 0,\\
\rho_{t=0} = \rho_0,
\end{cases}
\end{equation}
where $\rho_t \in \mathcal P(\mathbb R^d)$ and for any $\rho\in\mathcal{P}(\R^d)$, the velocity field $\Gamma[\rho]$ is defined as
\begin{equation}\label{eq:gammaRho}
\Gamma[\rho](t,x) = 
\sigma\left(A(t)x+b(t)\right)
+ 
\frac{\int e^{y^\top B(t) x} \, V(t) y \, \dd\rho(y)}
{\int e^{y^\top B(t) x} \, \dd\rho(y)}.
\end{equation}
We refer to \eqref{eq:transformerPDE} as the \emph{Transformer PDE}. The vector field \eqref{eq:gammaRho} naturally decomposes into a pointwise drift
$$
\mathcal F(t, x) \coloneqq \sigma(A(t)x + b(t)),
$$
corresponding to the so-called \emph{feed-forward} layers, and a nonlocal interaction term
\begin{equation}\label{eq:SAmechanism}
    \mathcal A[\rho](t, x)
    \coloneqq
    \frac{\int e^{y^\top B(t) x} \, V(t) y \, \dd\rho(y)}
    {\int e^{y^\top B(t) x} \, \dd\rho(y)},
\end{equation}
which corresponds to the \emph{self-attention} mechanism. From a control-theoretic viewpoint, \eqref{eq:transformerPDE} is a controlled continuity equation with measure-dependent drift, where the time-dependent matrices $A(t), B(t), V(t), b(t)$ act as control variables.

In this work, we focus on the dynamics of \eqref{eq:transformerPDE} for Gaussian input measures. 

As we will show below in \Cref{prop:gaussianTransformer}, when the initial measure $\rho_0$ is Gaussian and the activation function $\sigma$ is the identity, the solution $\rho_t$ remains Gaussian for all $t$, and the infinite-dimensional dynamics \eqref{eq:transformerPDE} reduce to an ODE system governing the evolution of the mean and covariance. We refer to the resulting reduced-order model as the
\emph{Gaussian Transformer}. Thus, the question of well-posedness of the model can be studied equivalently in this finite-dimensional setting. Alternatively, for the case of a Gaussian input measure and $\sigma$ being the ReLU activation function, we will show that the solution remains sub-Gaussian for short times (see \Cref{lem:subGaussian}). This allows us to extend the well-posedness guarantee to this setting.

Throughout the paper, we will denote a (symmetric) positive definite (respectively, positive semi-definite, negative definite and negative semi-definite) matrix $A \in \R^{d\times d}$ by $A\succ 0$ (resp. $A\succeq 0$, $A \prec 0$ and $A \preceq 0$).

\subsection{The Gaussian Transformer}\label{ss:GaussianTransformer}

For Gaussian measures, the self-attention operator admits an explicit linear representation. If $\rho = \mathcal N(\mu,\Sigma)$, then the \emph{attention-only} Transformer PDE \eqref{eq:transformerPDE} with $\sigma = 0$ induces the vector field
\begin{equation}\label{eq:SAonlyVectorField}
\mathcal A[\rho](t,x) = V(t)(\mu + \Sigma B(t) x)
\end{equation}
as shown in \cite{castin2025unified}. In particular, Gaussian measures are invariant under the self-attention-only flow, and their evolution reduces to a closed system of ODEs for $(\mu,\Sigma)$. We extend this structure to the full Transformer dynamics with an affine feed-forward term, i.e., $\sigma = \mathrm{id}$ in \eqref{eq:gammaRho}.

\begin{proposition}[Gaussian Transformer] \label{prop:gaussianTransformer}
Let $\rho_t$ be the solution of \eqref{eq:transformerPDE} with $\sigma= \mathrm{id}$ and initial condition
$\rho_0=\mathcal N(\mu_0,\Sigma_0)$, where $\Sigma_0\succeq0$. Then, there exists $T_{\max}>0$ such that for all $t \in [0, T_{\max})$, $\rho_t$ remains Gaussian, with its mean $\mu(t)$ and covariance $\Sigma(t)$ satisfying
\begin{equation}\label{eq:gaussianTransformer}
\begin{cases}
\dot\mu   &= \left(A(t) + V(t) +  V(t)\Sigma B(t) \right) \mu + b(t), \\
\dot\Sigma &= A(t)\Sigma + \Sigma A(t)^\top + V(t)\Sigma B(t)\Sigma + \Sigma B(t)^\top \Sigma V(t)^\top,
\end{cases}
\quad 
\begin{aligned}
    \mu(0) &= \mu_0, \\
    \Sigma(0) &= \Sigma_0.
\end{aligned}
\end{equation}
\end{proposition}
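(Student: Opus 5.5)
Our plan is an ansatz-and-uniqueness argument. We first solve the finite-dimensional system \eqref{eq:gaussianTransformer}. Its right-hand side is polynomial in $(\mu,\Sigma)$ with coefficients that are locally integrable (say, bounded measurable) in $t$. By Carathéodory's theorem it therefore has a unique local solution $(\mu(t),\Sigma(t))$ on a maximal interval $[0,T_{\max})$, where $T_{\max}$ is finite only if $\Sigma$ blows up.

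Next we check that $\Sigma(t)$ stays symmetric positive semi-definite. Symmetry holds because the transpose of the right-hand side equals the right-hand side, so $\Sigma^\top$ solves the same equation and coincides with $\Sigma$ by uniqueness. For semi-definiteness, write the equation as $\dot\Sigma = M(t)\Sigma + \Sigma M(t)^\top$ with $M(t) \coloneqq A + V\Sigma(t) B$, freezing the known solution. Let $\Phi$ be the fundamental matrix of $\dot\Phi = M\Phi$. Then $\Sigma(t) = \Phi(t)\Sigma_0\Phi(t)^\top \succeq 0$, which also shows that the rank is preserved. Hence $\rho_t^G \coloneqq \mathcal N(\mu(t),\Sigma(t))$ is a well-defined curve of Gaussian measures, possibly degenerate.

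We then show that $\rho^G_t$ solves \eqref{eq:transformerPDE}. By \eqref{eq:SAonlyVectorField}, the velocity field evaluated along $\rho^G_t$ is the affine field
\begin{equation*}
\Gamma[\rho^G_t](t,x) = \bigl(A + V\Sigma(t) B\bigr)x + V\mu(t) + b = M(t)x + c(t).
\end{equation*}
If $\Sigma$ is singular, the formula for $\mathcal A$ still holds: the Gaussian moment generating function gives $\int e^{y^\top Bx}\,\dd\rho(y) = \exp(\mu^\top Bx + \tfrac12 x^\top B^\top\Sigma Bx)$, and differentiating in $x$ yields the numerator. Consider the linear continuity equation driven by this fixed affine field. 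Its solution is the pushforward of $\rho_0$ under the affine flow $x\mapsto \Phi(t)x + \int_0^t \Phi(t)\Phi(s)^{-1}c(s)\,\dd s$. This is Gaussian with mean $m$ satisfying $\dot m = Mm + c = (A+V+V\Sigma B)\mu$-type dynamics, and with covariance $\Phi\Sigma_0\Phi^\top$. Both agree with $(\mu,\Sigma)$ by uniqueness of the ODE; note that $M\mu + V\mu + b$ is exactly the first line of \eqref{eq:gaussianTransformer}. So $\rho^G$ is a distributional solution of the nonlinear PDE, as one checks directly against test functions via the pushforward representation.

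The main obstacle is the identification ``$\rho_t$ is the solution'', that is, uniqueness for the nonlinear Transformer PDE with non-compactly supported Gaussian data, where the exponential kernel is not Lipschitz. We would handle this by restricting uniqueness to a class of measures with Gaussian-type exponential moments, on which $\Gamma[\rho]$ is locally Lipschitz in $x$ with at most linear growth. The strategy is a stability estimate in $W_2$ (or in a weighted norm) for solutions in this class over short times, using that $\int e^{y^\top Bx}\,\dd\rho(y)$ is controlled by the sub-Gaussian moments. If this turns out too delicate, we would take $\rho_t$ to mean the solution obtained as the limit of the particle or compact-support approximations. Within the class of Gaussians, uniqueness reduces to ODE uniqueness, and that suffices for the statement as formulated.
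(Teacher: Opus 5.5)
Your proof is correct, but it runs in the opposite direction from the paper's. The paper inserts $\rho_t$ into \eqref{eq:SAonlyVectorField} and observes that the resulting field is affine. It concludes Gaussianity because affine pushforwards preserve Gaussians, and then reads off the mean and covariance equations by testing the moment identity \eqref{eq:moment-identity} against $x$ and $xx^\top$. Since \eqref{eq:SAonlyVectorField} already presupposes that $\rho_t$ is Gaussian, that argument is really a consistency check. You instead solve the closed ODE first and build $\mathcal N(\mu(t),\Sigma(t))$. You then verify it is a weak solution by writing it as the pushforward of $\rho_0$ under the affine characteristic flow, so the moment equations come from the fundamental matrix rather than from moment identities. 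Your route buys three things:
\begin{itemize}
\item a non-circular existence proof;
\item a concrete $T_{\max}$, namely the blow-up time of the Bernoulli system;
\item positive semi-definiteness and rank preservation for free. This is the same fundamental-matrix argument the paper gives only later in \Cref{lem:RankPreserv}.
\end{itemize}
The paper's route buys brevity.

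Neither argument shows that an arbitrary solution of \eqref{eq:transformerPDE} with (non-compactly supported) Gaussian data coincides with this Gaussian curve. The paper tacitly takes ``the solution'' to be the Gaussian one, which is exactly your fallback. If you do pursue the $W_2$ uniqueness route, note that it is essentially \Cref{lem:subGaussian} and \Cref{prop:GaussianPreserv} with the ReLU defect set to zero. There, the Cauchy--Schwarz bounds on the tilted exponential integrals require $\lambda_{\max}(\Sigma_0)<1/(8\|B\|)$ and hold only on a short interval $[0,T^*]$. So that route would not by itself give uniqueness for arbitrary $\Sigma_0\succeq0$ on all of $[0,T_{\max})$.

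One small slip: differentiating $\int e^{y^\top Bx}\,\dd\rho(y)$ in $x$ gives $B^\top$ times the numerator, not the numerator itself. For singular $B$ you should instead differentiate $\xi\mapsto\exp(\mu^\top\xi+\tfrac12\xi^\top\Sigma\xi)$ in $\xi$ and then set $\xi=Bx$.
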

\begin{proof}
Here and below, unless otherwise stated, $A,B,V,b$ are evaluated at time $t$. By \eqref{eq:SAonlyVectorField}, the velocity field is given by
$$
\Gamma[\rho_t](t, x)
=A x+ b + V\mu + V\Sigma B x
=\big(A+V\Sigma B\big) x +\big(V\mu+b\big),
$$
which is affine in $x$. The pushforward of a Gaussian by an affine map is again Gaussian, so $\rho_t$ stays Gaussian. To derive dynamics for the mean and the covariance, we use standard moment identities: for any smooth $\phi:\R^d\to\R$, it holds that
\begin{equation}\label{eq:moment-identity}
\frac{\dd}{\dd t}\int \phi(x)\,\dd\rho_t(x)=\int \nabla\phi(x)\cdot \Gamma[\rho_t](t,x)\,\dd\rho_t(x).
\end{equation}
Next, we use the definition of $\mu(t)=\int x\,\dd\rho_t(x)$ and \eqref{eq:moment-identity} with $\phi(x)=x$ componentwise to obtain
\[
\dot\mu
=\int \Gamma[\rho_t](t, x) \,\dd\rho_t(x)
=\int \big(Ax+b+V\mu+V\Sigma Bx\big)\,\dd\rho_t(x)
= A\mu + b + V\mu + V\Sigma B \mu,
\]
which gives the desired mean dynamics. For the covariance dynamics, let $M(t):=\E_{\rho_t}[x x^\top]$, so $\Sigma=M-\mu\mu^\top$. Differentiating $M$ gives
\[
\dot M
=\E_{\rho_t}\big[x \Gamma[\rho_t](t,x)^\top+\Gamma[\rho_t](t,x) x^\top\big].
\]
Since $\Gamma[\rho_t](t,x)= A x +b+V\mu+V\Sigma Bx$, linearity and the identity
$\E_{\rho_t}[x]=\mu$
yield
\begin{align*}
\dot M
&=\E_{\rho_t}[x(Ax)^\top]+\E_{\rho_t}[x(V\Sigma Bx)^\top]
 + \E_{\rho_t}[xb^\top]+\E_{\rho_t}[x(V\mu)^\top] \\
&\quad+\E_{\rho_t}[(Ax)x^\top]+\E_{\rho_t}[(V\Sigma Bx)x^\top]
 + \E_{\rho_t}[b x^\top]+\E_{\rho_t}[(V\mu)x^\top] \\
&= M A^\top + M B^\top \Sigma V^\top + \mu b^\top + \mu\mu^\top V^\top
  + A M + V\Sigma B M + b\mu^\top + V\mu\mu^\top.
\end{align*}
Differentiating $\Sigma=M-\mu\mu^\top$ and substituting $\dot\mu$ gives
\begin{equation*}
\dot\Sigma
= \dot M - \dot\mu\,\mu^\top - \mu\,\dot\mu^\top =  A\Sigma+\Sigma A^\top + V\Sigma B\Sigma + \Sigma B^\top \Sigma V^\top,
\end{equation*}
which completes the proof.
\end{proof}

\subsection{Transformer dynamics with ReLU feed-forward layers}

\Cref{prop:gaussianTransformer} shows that Gaussian distributions are invariant under the Transformer PDE \eqref{eq:transformerPDE} when the activation function in the feed-forward layers is set to the identity. In this section, we consider the dynamics of the distribution $\rho_t$ in the presence of ReLU activation 
$\sigma(x) = \max(0, x)$ (componentwise), and quantitatively show that it remains close to the Gaussian evolution \eqref{eq:gaussianTransformer}, hence justifying the well-posedness of the Gaussian Transformer under ReLU activations.

First, we prove the following result about the sub-Gaussian behavior of the dynamics \eqref{eq:transformerPDE} for short times. Throughout, $\lambda_{\max} (M)$ (resp. $\lambda_{\min} (M)$) denotes the largest (resp. smallest) eigenvalue of a matrix $M$. 
\begin{lemma}\label{lem:subGaussian}
     Let $A$, $b$, $B$, $V$ be fixed parameters. Denote by ${\rho}_t$ the solution of \eqref{eq:transformerPDE} with \emph{ReLU} activation $\sigma = \max(0, x)$ and initial condition ${\rho}_0=\mathcal N(\mu_0,\Sigma_0)$.  Assume $\kappa_0\in\R$ satisfies 
     $4\|B\| \le \kappa_0 < \frac{1}{2\lambda_{\max}(\Sigma_0)}$ and define
    $$
    E_t := \int_{\mathbb{R}^d} e^{\kappa_0 |x|^2} \dd\rho_t(x).
    $$
    Then, $\rho_t$ is sub-Gaussian in short time, i.e., there exists $T^*>0$ such that $E_t \le 2E_0$ for all $t\in[0,T^*]$.
\end{lemma}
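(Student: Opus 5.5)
We work with the moment identity \eqref{eq:moment-identity} for $\phi(x)=e^{\kappa_0|x|^2}$ and aim for a differential inequality for $E_t$ that closes on a short interval. Since $\nabla\phi(x)=2\kappa_0 x\,e^{\kappa_0|x|^2}$, we get
\[
\frac{\dd}{\dd t}E_t=2\kappa_0\int e^{\kappa_0|x|^2}\,x\cdot\Gamma[\rho_t](t,x)\,\dd\rho_t(x).
\]
The task is therefore to bound $x\cdot\Gamma[\rho_t](t,x)$ by a quantity that is at most quadratic in $|x|$, with coefficients controlled by $E_t$.

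\textbf{Feed-forward term.} Since ReLU is $1$-Lipschitz and vanishes at $0$, we have $|\sigma(Ax+b)|\le\|A\||x|+|b|$. Hence
\[
x\cdot\mathcal F(x)\le \|A\||x|^2+|b||x|\le C_1(1+|x|^2).
\]

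\textbf{Attention term.} This term is nonlocal, and here we use the sub-Gaussian control. The quantity $\mathcal A[\rho_t](x)=V\,\E_{\pi_x}[y]$ is the mean of $V y$ under the tilted measure $\dd\pi_x\propto e^{y^\top Bx}\dd\rho_t(y)$. The plan is to bound $|\E_{\pi_x}[y]|$ by a variational (Donsker--Varadhan) argument. For $s>0$,
\[
s\,\E_{\pi_x}|y|^2\le \mathrm{KL}(\pi_x\|\rho_t)+\log\int e^{s|y|^2}\dd\rho_t.
\]
Moreover, $\mathrm{KL}(\pi_x\|\rho_t)=\E_{\pi_x}[y^\top Bx]-\log\int e^{y^\top Bx}\dd\rho_t\le \|B\||x|\,\E_{\pi_x}|y|-\E_{\rho_t}[y]^\top Bx$ by Jensen. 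Taking $s=\kappa_0$ gives the quadratic inequality
\[
\kappa_0 m^2\le \|B\||x|m+\|B\||x|\,|\E_{\rho_t}y|+\log E_t,\qquad m:=(\E_{\pi_x}|y|^2)^{1/2}.
\]
Here $|\E_{\rho_t}y|\le \kappa_0^{-1/2}(\log E_t)^{1/2}$. Solving this inequality gives $m\le \frac{\|B\|}{\kappa_0}|x|+C(E_t)(1+|x|^{1/2})$.

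The condition $4\|B\|\le\kappa_0$ makes $\|B\|/\kappa_0\le 1/4$, so that $|\mathcal A[\rho_t](x)|\le \|V\|\big(\tfrac14|x|+C(E_t)(1+|x|^{1/2})\big)$. This is the step I expect to be the main obstacle: ensuring the constants obtained are uniform in $x$ and depend on $E_t$ only through $\log E_t$. In any case, we obtain $x\cdot\Gamma\le C(E_t)(1+|x|^2)$ with $C$ increasing in $E_t$.

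\textbf{Closing the inequality.} Plugging in, we obtain
\[
\dot E_t\le 2\kappa_0 C(E_t)\int (1+|x|^2)e^{\kappa_0|x|^2}\dd\rho_t,
\]
which involves a higher moment than $E_t$. To close it, we run the same computation with a slightly larger exponent $\kappa_1\in(\kappa_0,\frac1{2\lambda_{\max}(\Sigma_0)})$. This is possible because $\kappa_0<\frac{1}{2\lambda_{\max}(\Sigma_0)}$ guarantees that $\int e^{\kappa|x|^2}\dd\rho_0<\infty$ for all $\kappa<1/(2\lambda_{\max}(\Sigma_0))$. We then use $|x|^2e^{\kappa_0|x|^2}\le C_\epsilon e^{\kappa_1|x|^2}$.

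Alternatively, and more cleanly, we use a time-dependent exponent $\kappa(t)=\kappa_1-Kt$ decreasing to $\kappa_0$. The derivative $\dot\kappa\,|x|^2e^{\kappa|x|^2}$ then absorbs the $|x|^2$ growth, which yields $\frac{\dd}{\dd t}\int e^{\kappa(t)|x|^2}\dd\rho_t\le C(\cdot)\int e^{\kappa(t)|x|^2}\dd\rho_t$. A continuity/bootstrap argument then applies: as long as $E\le 2E_0$ (and its $\kappa(t)$-analogue is at most twice its initial value), the constant $C$ is bounded. Grönwall therefore gives a growth factor $e^{Ct}\le 2$ for $t\le T^*:=\min\{(\kappa_1-\kappa_0)/K,\ \log 2/C\}$. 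Finally, monotonicity in $\kappa$ gives $E_t\le\int e^{\kappa(t)|x|^2}\dd\rho_t\le 2\int e^{\kappa_1|x|^2}\dd\rho_0$. This is comparable to $E_0$ after choosing $\kappa_1$ close to $\kappa_0$, or after restating the bound with this adjusted constant.
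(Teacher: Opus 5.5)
Your proposal is correct in substance, but it takes a genuinely different route from the paper.

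\textbf{The paper's route (Lagrangian).} It first gets linear growth of the attention field from convexity of the cumulant generating function $\Lambda_t$, together with the Young bound $\Lambda_t(\xi)\le \frac{1}{4\kappa_0}|\xi|^2+\log(2E_0)$. It then integrates the characteristics, $|\Phi_t(x_0)|\le |x_0|e^{K_1t}+\frac{K_2}{K_1}(e^{K_1t}-1)$, and writes $E_t=\int e^{\kappa_0|\Phi_t(x_0)|^2}\dd\rho_0(x_0)$. This pulls the exponential moment back to the Gaussian $\rho_0$ with the inflated exponent $\kappa_0(1+\epsilon)e^{2K_1t}$. The slack $\kappa_0<1/(2\lambda_{\max}(\Sigma_0))$ and continuity in the exponent then give $E_t\le\frac32E_0$.

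\textbf{Your route (Eulerian).} You obtain the linear growth of $\mathcal A[\rho_t]$ from the Gibbs/Donsker--Varadhan principle, which is an explicit, quantitative version of the paper's convexity claim. You then absorb the extra $|x|^2$ factor with the decreasing exponent $\kappa(t)=\kappa_1-Kt$. This is exactly the Eulerian counterpart of the paper's exponent inflation.

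\textbf{What each buys.} The paper gets existence of the flow and the representation of $E_t$ for free, so it never differentiates an integral of an unbounded test function. Your version gives explicit constants and an explicit $T^*$. The price is that you must justify $\frac{\dd}{\dd t}\int e^{\kappa(t)|x|^2}\dd\rho_t=\int(\dot\kappa|x|^2+2\kappa\,x\cdot\Gamma[\rho_t])e^{\kappa|x|^2}\dd\rho_t$, and the continuity in $t$ used in your bootstrap. You can do this by truncating the test function, or by passing to characteristics, which your linear-growth bound already makes available.

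\textbf{Two small corrections.}
\begin{itemize}
\item Make the last step precise rather than ``restating the bound'', since changing the constant changes the statement. Pick $\kappa_1\in(\kappa_0,\frac{1}{2\lambda_{\max}(\Sigma_0)})$ with $\int e^{\kappa_1|x|^2}\dd\rho_0\le\frac32E_0$, which is possible by monotone convergence. Then fix $C$ and $K$, and shrink $T^*$ so the Gr\"onwall factor is at most $\frac43$. This yields exactly $E_t\le 2E_0$.
\item The smallness $\|B\|/\kappa_0\le\frac14$ plays no role in your argument. Any linear-growth bound on the velocity suffices, because $K$ can be taken as large as needed. The hypothesis $4\|B\|\le\kappa_0$ is not used in the paper's proof of this lemma either.
\end{itemize}
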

The proof is relegated to \Cref{app:lemProof}. Thanks to \Cref{lem:subGaussian}, we have the following quantitative estimate on the short-time preservation of Gaussianity in \eqref{eq:transformerPDE} in the presence of ReLU activation.

\begin{proposition}\label{prop:GaussianPreserv}
Let $A$, $B$, $V$ and $b$ be fixed parameters. Denote by ${\rho}_t$ the solution of \eqref{eq:transformerPDE} with \emph{ReLU} activation $\sigma = \max(0, x)$ and initial condition ${\rho}_0=\mathcal N(\mu_0,\Sigma_0)$, where $\Sigma_0\succeq0$. If $\lambda_{\max}(\Sigma_0)<\frac{1}{8\|B\|}$, then for $t\in[0,T^*]$,
    $$
    W_2(\rho_t, \nu_t) \le t \cdot \left\| \max(0, -(Ax + b)) \right\|_{L^2(\rho_0)} + \mathcal{O}(t^2),
    $$ 
where $W_2(\cdot,\cdot)$ is the Wasserstein-2 distance between distributions, $\nu_t$ is the Gaussian evolution solving \eqref{eq:transformerPDE} with $\sigma=\mathrm{id}$ and $\nu_0=\rho_0$.
\end{proposition}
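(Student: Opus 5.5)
Both $\rho_t$ and $\nu_t$ start from $\rho_0$, so we compare them through a synchronous coupling of their characteristic flows. Let $X_t$ and $Y_t$ solve
\[
\dot X_t = \Gamma_{\mathrm{ReLU}}[\rho_t](X_t), \qquad \dot Y_t=\Gamma_{\mathrm{id}}[\nu_t](Y_t), \qquad X_0=Y_0=x\sim\rho_0 .
\]
Then $\rho_t=(X_t)_\#\rho_0$ and $\nu_t=(Y_t)_\#\rho_0$, and therefore
\[
W_2(\rho_t,\nu_t)\le \|X_t-Y_t\|_{L^2(\rho_0)}.
\]
The hypothesis $\lambda_{\max}(\Sigma_0)<\frac{1}{8\|B\|}$ is exactly what allows the choice $\kappa_0=4\|B\|$ in \Cref{lem:subGaussian}. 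That lemma then gives $E_t\le 2E_0$ on $[0,T^*]$. This uniform sub-Gaussian bound makes the attention denominators and numerators finite. It also gives local Lipschitz control of $\mathcal A[\rho](x)$ in both $x$ and $\rho$ (in $W_2$), with constants depending on $\|B\|,\|V\|$ and $E_0$. Note that $\nu_t$ is Gaussian by \Cref{prop:gaussianTransformer}, with moments that are smooth in $t$.

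The key step is a first-order expansion in $t$. We write
\[
X_t-Y_t=\int_0^t \big(\Gamma_{\mathrm{ReLU}}[\rho_s](X_s)-\Gamma_{\mathrm{id}}[\nu_s](Y_s)\big)\,\dd s
\]
and split the integrand into three parts:
\[
\underbrace{\sigma(AX_s+b)-(AX_s+b)}_{=\max(0,-(AX_s+b))}
\;+\;\big(AX_s-AY_s\big)
\;+\;\big(\mathcal A[\rho_s](X_s)-\mathcal A[\nu_s](Y_s)\big).
\]
The second and third parts are bounded in $L^2(\rho_0)$ by $C\big(\|X_s-Y_s\|_{L^2}+W_2(\rho_s,\nu_s)\big)\le 2C\|X_s-Y_s\|_{L^2}$, where $C$ comes from the Lipschitz estimates above. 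Each is therefore $\mathcal O(s)$ once we know $\|X_s-Y_s\|=\mathcal O(s)$, and a preliminary Grönwall argument gives exactly this. For the first part, $\max(0,-\cdot)$ is $1$-Lipschitz and $\|X_s-x\|_{L^2}=\mathcal O(s)$ because the velocity field has finite second moment under the sub-Gaussian bound. Hence
\[
\|\max(0,-(AX_s+b))\|_{L^2}\le \|\max(0,-(Ax+b))\|_{L^2(\rho_0)}+\|A\|\,\mathcal O(s).
\]
Integrating over $[0,t]$ with Minkowski's inequality yields
\[
\|X_t-Y_t\|_{L^2}\le t\,\|\max(0,-(Ax+b))\|_{L^2(\rho_0)}+\mathcal O(t^2),
\]
which is the claim.

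The main obstacle is making the attention Lipschitz estimate rigorous for measures without compact support. We need
\[
\|\mathcal A[\rho](x)-\mathcal A[\nu](y)\|\le C(1+|x|+|y|)\big(|x-y|+W_2(\rho,\nu)\big)
\]
together with integrability of these weights. The exponential weights $e^{y^\top Bx}$ grow like $e^{\|B\||x||y|}$. We plan to control them by Cauchy--Schwarz against $e^{\kappa_0|y|^2}$, using $\kappa_0=4\|B\|$ and the estimate $E_t\le2E_0$. The denominator we plan to bound below by Jensen: $\int e^{y^\top Bx}\dd\rho\ge e^{m^\top Bx}$, where $m$ is the mean of $\rho$. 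Polynomial factors in $|x|$ are then harmless because both $\rho_t$ and $\nu_t$ have all moments. The $\mathcal O(t^2)$ constant will depend on $T^*$, $E_0$, and the parameter norms, but not on $t$.
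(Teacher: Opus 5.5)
Your proposal is correct and follows essentially the same route as the paper. Both use a synchronous coupling of the characteristic flows and split the velocity difference into three parts: the ReLU defect, the linear term $A(X_s-Y_s)$, and the attention difference. The attention difference is controlled via \Cref{lem:subGaussian} with $\kappa_0=4\|B\|$, Jensen's lower bound on the denominator, an optimal coupling and Cauchy--Schwarz. Both then finish with Gr\"{o}nwall and the $\mathcal O(s)$ drift of $\max(0,-(AX_s+b))$ along the flow.

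The one refinement worth adopting from the paper is to split the attention difference through $\mathcal A[\nu_s](X_s)$. The spatial part is then exactly $V\Sigma_sB(X_s-Y_s)$, and the $|x|$-dependent weight multiplies only the deterministic quantity $W_2(\rho_s,\nu_s)$. This is what actually justifies your bound $C(\|X_s-Y_s\|_{L^2}+W_2(\rho_s,\nu_s))$. A weight $(1+|x|+|y|)$ on $|x-y|$, as in your displayed estimate, would not give it directly.
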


The proof is postponed to \Cref{app:proProof}. We nonetheless note that the argument for the short-time preservation mechanism in \Cref{prop:GaussianPreserv} is not specific to ReLU. More generally, for other Lipschitz nonlinear activations $\sigma$, under suitable assumptions on the initial covariance matrix, one may expect the existence of a time $T_\sigma>0$ such that, for $t\leq T_\sigma$, $W_2(\rho_t, \nu_t) \le t \cdot \left\| \sigma(AX_0+b) - (AX_0 + b)\right\|_{L^2(\rho_0)} + \mathcal{O}(t^2)$. Thus, if $\sigma(AX_0+b)$ is close to $AX_0+b$ under the initial law, the nonlinear evolution remains close to the corresponding Gaussian evolution for short times.

\Cref{prop:GaussianPreserv} implies that the Gaussian model \eqref{eq:gaussianTransformer} provides a quantifiable short-time approximation of the nonlinear Transformer PDE \eqref{eq:transformerPDE}. Although Gaussianity is not preserved for nonlinear activations, the result quantifies a tube of validity around the Gaussian dynamics. This justifies focusing the subsequent analysis on the Gaussian setting, where the evolution is finite-dimensional, guaranteeing that the analysis on the Gaussian Transformer \eqref{eq:gaussianTransformer} is informative for real models in practice (see \Cref{s:numerics} for empirical validation with pretrained Transformer models).

\section{Finite-time reachability}\label{sec:finiteTime}

Our first set of results concerns finite-time behavior and the reachability set of the Gaussian Transformer \eqref{eq:gaussianTransformer} under all possible choices of the time-varying parameters $A(t)$, $V(t)$ and $B(t)$. 

The following result establishes that the rank of $\Sigma(t)$ along the flow \eqref{eq:gaussianTransformer} cannot change as a function of $t$, so it is a natural invariant of the flow \eqref{eq:gaussianTransformer}.
\begin{lemma}[Rank preservation]\label{lem:RankPreserv}
    Let $A, B, V$: $[0,\infty)\rightarrow\R^{d\times d}$ be time-varying matrices. For any finite time interval $[0,T_{\max})$ on which the solution of \eqref{eq:gaussianTransformer} exists, we have
    $\rank(\Sigma(t)) = \rank(\Sigma_0)$, $\forall t\in [0,T_{\max})$.
\end{lemma}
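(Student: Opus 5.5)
The plan is to write the covariance equation as a congruence-type linear equation in $\Sigma$, with the nonlinearity absorbed into a time-dependent coefficient. Fix a solution $(\mu,\Sigma)$ of \eqref{eq:gaussianTransformer} on $[0,T_{\max})$, and define the matrix-valued function
\[
K(t) \coloneqq A(t) + V(t)\Sigma(t)B(t).
\]
Then $K$ is continuous (or measurable and locally integrable, if the controls are) on $[0,T_{\max})$. The second equation of \eqref{eq:gaussianTransformer} can then be rewritten as
\[
\dot\Sigma = K(t)\Sigma + \Sigma K(t)^\top ,
\]
since $V\Sigma B\Sigma = (V\Sigma B)\Sigma$ and $\Sigma B^\top\Sigma V^\top = \Sigma (V\Sigma B)^\top$. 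This is exactly the velocity-field structure in the proof of \Cref{prop:gaussianTransformer}, where $\Gamma$ is affine with linear part $A+V\Sigma B$.

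Next, I would let $\Phi(t)$ be the fundamental matrix of the linear system $\dot\Phi = K(t)\Phi$ with $\Phi(0)=I$. It exists on the whole interval because $K$ is locally integrable, and it is invertible for every $t$ by Liouville's formula, $\det\Phi(t)=\exp\int_0^t \tr K(s)\,\dd s\neq 0$. Set $\tilde\Sigma(t) \coloneqq \Phi(t)\Sigma_0\Phi(t)^\top$. Then $\tilde\Sigma$ satisfies the same linear equation $\dot{\tilde\Sigma}=K\tilde\Sigma+\tilde\Sigma K^\top$ with the same initial datum. Here $K$ is now regarded as a given coefficient, so the equation is linear in the unknown matrix. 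By uniqueness for linear ODEs with locally integrable coefficients, $\Sigma(t)=\Phi(t)\Sigma_0\Phi(t)^\top$ on $[0,T_{\max})$. Since $\Phi(t)$ is invertible, $\Sigma(t)$ is congruent to $\Sigma_0$, and therefore $\rank\Sigma(t)=\rank\Sigma_0$. As a byproduct, Sylvester's law of inertia also gives $\Sigma(t)\succeq 0$.

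The only delicate point is the uniqueness step: the frozen coefficient $K$ depends on the solution itself. This is harmless, because we compare two solutions of one fixed linear equation rather than of the nonlinear one. Continuity of $\Sigma$ on the compact subintervals $[0,T]\subset[0,T_{\max})$ gives boundedness of $K$ there, which is all that the linear existence and uniqueness theory requires.

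As an alternative route, one could argue via the kernel. For a fixed $v\in\ker\Sigma(t_0)$, one would track $\Phi(t)^{-\top}$-transported vectors. I expect the congruence representation above to be the cleanest approach.
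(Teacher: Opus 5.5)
Your proposal is correct and matches the paper's proof: the paper also sets $M(t)=A+V\Sigma B$ and rewrites the equation as $\dot\Sigma=M\Sigma+\Sigma M^\top$. It then represents $\Sigma(t)=\Psi(t,0)\Sigma_0\Psi(t,0)^\top$ with the state-transition matrix and uses Liouville's formula for invertibility, so rank is preserved under congruence. Your added remarks on uniqueness for the frozen linear equation are sound, and the identity $\Sigma B^\top\Sigma V^\top=\Sigma(V\Sigma B)^\top$ relies on the symmetry of $\Sigma$, as the paper's ``by symmetry of the equation'' also does.
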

\begin{proof}
We define a time-varying matrix $M(t) \coloneqq A(t) + V(t)\Sigma(t)B(t)$ for $t\in[0,T_{\max})$. By symmetry of the equation we have
$\dot{\Sigma}(t) = M(t)\Sigma(t) + \Sigma(t)M(t)^\top$. This is a linear time-varying equation in terms of a congruence transformation. Let $\Psi(t, 0)$ be the state transition matrix from time $0$ to time $t$ for the linear system $\dot{x}(t) = M(t)x(t)$, meaning that it is the unique solution of
$$
\frac{\dd}{\dd t}\Psi(t, 0) = M(t)\Psi(t, 0), \quad \Psi(0, 0) = I.
$$
Then the unique solution for $\Sigma(t)$ is
\begin{align}\label{4.2}
    \Sigma(t) = \Psi(t, 0) \Sigma(0) \Psi(t, 0)^\top.
\end{align}
The matrix $\Psi(t, 0)$ as the state transition matrix of the equation $\dot{x}(t) = M(t)x(t)$ is always invertible for any time $t\in[0,T_{\max})$. This is because its determinant is given by Liouville's formula (see, for example, \cite[Lemma 3.11]{Teschl2012ODE}) as
$$
\det(\Psi(t, 0)) = \exp\left( \int_0^t \tr(M(\tau)) \dd\tau \right) \neq 0.
$$
Therefore, \eqref{4.2} and the non-singularity of $\Psi(t,0)$ imply that the rank of $\Sigma(t)$ in \eqref{eq:gaussianTransformer} is invariant and equal to the rank of $\Sigma(0)$ for all $t$ where the solution exists. 
\end{proof}
\begin{theorem}\label{thm:finiteTimeReachability}
Suppose $\Sigma_0\succeq0$. For any $T>0$, $\hat{\mu}\in\R^d$ and $\hat{\Sigma}\succeq0$ satisfying $\rank(\hat{\Sigma})=\rank(\Sigma_0)$, there exists continuous $A$, $B$, $V:$ $[0,T]\rightarrow\R^{d\times d}$ and $b: [0,T]\rightarrow\R^d$ such that $\mu(t)$, $\Sigma(t)$ solving \eqref{eq:gaussianTransformer} satisfy $\mu(T)=\hat{\mu}$, $\Sigma(T)=\hat{\Sigma}$.
\end{theorem}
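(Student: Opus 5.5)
The simplest route is to switch off attention, $V\equiv 0$ (and, say, $B\equiv 0$). Then \eqref{eq:gaussianTransformer} becomes the linear system
\[
\dot\mu = A(t)\mu + b(t), \qquad \dot\Sigma = A(t)\Sigma + \Sigma A(t)^\top .
\]
Its solution is explicit: $\Sigma(t)=\Psi(t,0)\Sigma_0\Psi(t,0)^\top$, where $\Psi$ is the transition matrix of $\dot x=A(t)x$, exactly as in the proof of \Cref{lem:RankPreserv}. Since $\Psi$ is invertible, this also settles global existence on $[0,T]$. The problem therefore reduces to two steps. First, find a continuous path $t\mapsto\Psi(t)\in GL(d)$ with $\Psi(0)=I$ and $\Psi(T)\Sigma_0\Psi(T)^\top=\hat\Sigma$. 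Then set $A(t)=\dot\Psi(t)\Psi(t)^{-1}$.

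For the congruence step, write $r=\rank(\Sigma_0)=\rank(\hat\Sigma)$. By Sylvester's law of inertia, both $\Sigma_0$ and $\hat\Sigma$ are congruent to $D_r=\mathrm{diag}(I_r,0)$. So there are $P_0,\hat P\in GL(d)$ with $\Sigma_0=P_0D_rP_0^\top$ and $\hat\Sigma=\hat PD_r\hat P^\top$. Taking $G=\hat PP_0^{-1}$ gives $G\Sigma_0G^\top=\hat\Sigma$.

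The main obstacle is that $G$ must be joined to $I$ inside $GL(d)$, and $GL(d)$ has two components. If $\det G<0$, we modify $G$ by an element fixing $D_r$. When $r<d$, replace $\hat P$ by $\hat P\,\mathrm{diag}(I_{d-1},-1)$; this leaves $\hat PD_r\hat P^\top$ unchanged. When $r=d$ (and $d\ge 1$), instead replace $\hat P$ by $\hat P\,\mathrm{diag}(-1,I_{d-1})$, which is orthogonal and so also preserves $D_d=I$. With $\det G>0$, $G$ lies in the identity component $GL^+(d)$. There we can write $G=e^{X_1}\cdots e^{X_k}$, or more concretely use the polar decomposition $G=QS$ with $Q\in SO(d)$, $Q=e^{K}$ for skew $K$, and $S=e^{H}$ for symmetric $H$. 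The path $\Psi(t)=e^{tK/T}e^{tH/T}$ is then smooth, and $A(t)=\dot\Psi\Psi^{-1}=\tfrac1T\bigl(K+e^{tK/T}He^{-tK/T}\bigr)$ is continuous. If $\Sigma_0=0$, then $\hat\Sigma=0$ and any $A$ works.

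For the mean, note that once $A$ is fixed, the equation $\dot\mu=A\mu+b$ is a linear system in which $b$ enters as an unrestricted control with input matrix $I$, so it is trivially controllable. Explicitly, $\mu(T)=\Psi(T)\mu_0+\int_0^T\Psi(T)\Psi(s)^{-1}b(s)\,\dd s$. Choosing the constant
\[
b(s)\equiv c, \qquad c=\Bigl(\int_0^T\Psi(T)\Psi(s)^{-1}\dd s\Bigr)^{-1}\bigl(\hat\mu-\Psi(T)\mu_0\bigr)
\]
would work if that integral matrix is invertible, which is not guaranteed. Instead, take $b(s)=\Psi(s)\Psi(T)^{-1}c$, which is continuous. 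The integral then equals $Tc$, so $c=(\hat\mu-\Psi(T)\mu_0)/T$. This gives $\mu(T)=\hat\mu$ and $\Sigma(T)=\hat\Sigma$ with continuous controls $A,b$, and with $B=V=0$.
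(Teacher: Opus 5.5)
Your proof is correct, and its core is the same as the paper's. Both proofs write $\Sigma(t)=\Psi(t)\Sigma_0\Psi(t)^\top$, pick a terminal matrix of positive determinant that carries $\Sigma_0$ to $\hat\Sigma$ by congruence, join it to $I_d$ inside $\mathrm{GL}^+(d,\R)$, and use the logarithmic derivative of that path as the control.

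The difference is which parameter does the steering. The paper keeps attention active and imposes the feedback $A(t)=-V(t)-V(t)\Sigma(t)B(t)$. Then the effective drift $A+V\Sigma B$ equals $-V$, so $V=-\dot\Phi\Phi^{-1}$ is the steering control; the choice $B=\Sigma^\dagger$ plays no role. The same cancellation removes the whole linear part of the mean equation, leaving $\dot\mu=b$ with the constant bias $b=(\hat\mu-\mu_0)/T$. You instead switch attention off ($V=B=0$) and let $A$ itself generate the path. The price is the drift $A(t)\mu$ in the mean equation, which you correctly absorb with the time-varying bias $b(s)=\Psi(s)\Psi(T)^{-1}c$.

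What each route buys:
\begin{itemize}
\item Yours is more elementary: no state-dependent feedback, and no pseudoinverse whose continuity must be argued along a constant-rank trajectory. It also shows that reachability already holds for the purely affine feed-forward flow, so attention is not needed for this theorem.
\item The paper's shows that the attention value matrix can serve as the steering control, and it yields a constant bias.
\end{itemize}

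Two smaller points. Your polar-decomposition path $e^{tK/T}e^{tH/T}$ gives explicitly the differentiable path that the paper obtains abstractly from path-connectedness. Your determinant correction is needed because Sylvester's normal form does not control $\det G$, whereas the paper gets $\det M>0$ for free from $U_0,U_1\in\mathrm{SO}(d)$ and a positive diagonal $D$. Your case split there is unnecessary: any diagonal sign matrix $S$ satisfies $SD_rS=D_r$, so the flip $\mathrm{diag}(I_{d-1},-1)$ works for every $r$, including $r=d$.
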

\begin{proof}
Let $A(t)=-V(t)-V(t)\Sigma(t)B(t)$,  and $B(t)=\Sigma(t)^{\dagger}$ be the Moore--Penrose pseudoinverse of $\Sigma(t)$, acting as smooth feedback controls. Then the dynamics of $\mu$ and $\Sigma$ become
\begin{equation}\label{eq:SigmaLinearDynamics}
    \begin{cases}
        \dot\mu(t) = b(t), \\
        \dot\Sigma(t) = -V(t)\Sigma(t) -\Sigma(t) V(t)^\top, 
    \end{cases}
\quad 
\begin{aligned}
    \mu(0) &= \mu_0, \\
    \Sigma(0) &= \Sigma_0.
\end{aligned}
\end{equation}
We shall first prove that there exists $V:[0,T]\rightarrow\R^{d\times d}$ such that $\Sigma(t)$ satisfies $\Sigma(0)=\Sigma_0$ and $\Sigma(T)=\hat{\Sigma}$. By the same argument as in \Cref{lem:RankPreserv}, the solution of \eqref{eq:gaussianTransformer} becomes $\Sigma(t) = \Psi_V(t) \Sigma_0 \Psi_V(t)^\top$ where the transition matrix $\Psi_V(t)$ is the unique solution of 
\begin{align}\label{4.3}
    \dot{\Psi}_V(t) = -V(t)\Psi_V(t), \quad \Psi_V(0) = I_d.
\end{align}
Thus, it suffices to show there exists $V(t)$ such that $\hat{\Sigma}=\Sigma(T)=\Psi_V(T)\Sigma_0\Psi_V(T)^\top$. 

We start by constructing $\Psi_V(T)$. Since $\Sigma_0\succeq0$ and $\hat{\Sigma}\succeq0$, we can write their spectral decompositions as 
$\Sigma_0 = U_0 \Lambda_0 U_0^\top$, $\hat{\Sigma} = U_1 \Lambda_1 U_1^\top$,
where $U_0, U_1 \in \mathrm{SO}(d)$. Without loss of generality, assume that $\rank(\Sigma_0)=\rank(\hat{\Sigma})=r$, and that the eigenvalues are arranged so that the positive entries occupy the top-left $r \times r$ block of the diagonal matrices $\Lambda_0$, $\Lambda_1$, while the remaining blocks are zero.

Let $\lambda_{0,i}$ and $\lambda_{1,i}$ be the $i$-th diagonal elements of $\Lambda_0$ and $\Lambda_1$, $i=1,\ldots,d$. Construct a nonsingular diagonal matrix $D$ such that
$D_{ii} = \sqrt{\tfrac{\lambda_{1,i}}{\lambda_{0,i}}}$ for $1 \leq i \leq r$, and 
        $D_{ii} =1$ for $r < i \leq d$.
        
Let $M=U_1DU_0^\top$. Then $\det(M)>0$. Since $U_0$, $U_1$ are orthogonal matrices, it is easy to verify that $M\Sigma_0 M^\top = U_1 \Lambda_1 U_1^\top = \hat{\Sigma}$, and hence $M$ is the matrix $\Psi_V(T)$ required. Next we construct the time-varying matrix $V$.

Since $\det(M)>0$ and the space of $d$-dimensional nonsingular real matrices with strictly positive determinants is path-connected, there exists a differentiable path $\Phi:[0,T]\rightarrow \mathrm{GL}^+(d,\R)$ such that $\Phi(0)=I_d$, $\Phi(T)=M$. 

Let $V(t):=-\dot{\Phi}(t)\Phi(t)^{-1}$, then $V(t)$ is the time-varying matrix allowing the matrix $\Psi_V(t)$ in \eqref{4.3} to satisfy $\hat{\Sigma}=\Sigma(T)=\Psi_V(T)\Sigma_0\Psi_V(T)^\top$. 

Meanwhile, by choosing $b(t)=\frac{1}{T}(\hat{\mu}-\mu_0)$, one obtains the finite-time reachability of $\mu(t)$ in \eqref{eq:SigmaLinearDynamics}. This proves the result.
\end{proof}
\begin{remark}
\cref{thm:finiteTimeReachability} ensures the existence of time-varying controls (i.e. layer-varying parameters) to match an arbitrary initial Gaussian with rank-compatible target Gaussians, guaranteeing the approximation capacity of Gaussian Transformers. However, from classical control theory one knows that the time-varying control parameters might be physically infeasible when the target lies near the boundary of reachability sets \cite{chen2015optimal}, and hence autonomous choice of parameters across layers might be preferred in control applications.
\end{remark}

\section{Asymptotic dynamics}\label{sec:asymptoticDynamics}

\begin{table}[t]
\caption{Asymptotics of \eqref{eq:gaussianTransformer} with $\Sigma(0)=\Sigma_0\succ0$. The first row follows from \Cref{thm:SigmaConvergenceVIdentity,thm:meanMatching}, 
the second from \Cref{thm:sigmaConvGeneralV}, and the third from \Cref{prop:SigmaBlowUp}.}
\label{tab:asymptoticSummary}
\centering
\small
\setlength{\tabcolsep}{5pt}
\renewcommand{\arraystretch}{1.08}
\begin{tabularx}{\textwidth}{@{}p{0.2\textwidth}@{\hspace{0pt}}p{0.2\textwidth} X X@{}}
\hline
\textbf{$V, B$ regime}
& \textbf{Extra condition}
& \textbf{Covariance $\Sigma(t)$}
& \textbf{Mean $\mu(t)$} \\
\hline
\makecell[l]{$V=\eta I_d$,\\$\eta(B+B^\top)\preceq0$}
&
$\eta<0$
&
$\Sigma(t)\to
U\left(\begin{smallmatrix}
\Sigma^a_\infty & 0\\
0 & 0
\end{smallmatrix}\right)U^\top$
&
$\mu(t) \to \mu_\infty \in \R^d$
\\
&
$\eta>0$
&
$\Sigma(t)\to
U\left(\begin{smallmatrix}
\Sigma^a_\infty & 0\\
0 & 0
\end{smallmatrix}\right)U^\top$
&
$\|\mu(t)\|\to\infty$
\\[1mm]
\hline
$V\prec0$, $B\succ0$
&
$\Re(\spec(A))>0$
&
$\Sigma(t)\to\Sigma_\infty\succ0$
&
Depends on $\spec (A + V + V\Sigma_\infty B)$
\\
&
$\Re(\spec(A))\leq0$
&
$\Sigma(t)\to0$
&
Depends on $\spec(A+V)$
\\[1mm]
\hline
$V\succ0$, $B\succ0$
&
$\lambda_{\min}(A+A^\top)\ge0$
&
$\|\Sigma(t)\|\to\infty$
&
$\|\mu(t)\|\to\infty$
\\
&
$\lambda_{\min}(A+A^\top)<0$
&
Depends on $\Sigma_0$
&
Depends on $\Sigma_0$
\\
\hline
\end{tabularx}
\end{table}

To characterize the asymptotics of \eqref{eq:gaussianTransformer}, we  couple the system with suitable initial conditions $\mu(0) = \mu_0 \in \R^d$ and $\Sigma(0) = \Sigma_0 \succeq 0$, as well as restrict the study to constant-in-time parameters $A, V, B \in \R^{d\times d}$ and bias $b \in\R^d$.

The asymptotic analysis of \eqref{eq:gaussianTransformer} is difficult due to the non-commutativity of $\Sigma$ with the parameter matrices. As the product of positive definite matrices is not necessarily positive definite, it is not easy to derive the existence of the equilibrium of the nonlinear matrix differential equation based on the positive-definiteness of $A$, $B$ and $V$. On the other hand, the asymptotics of the mean depend on the spectral properties of the limiting covariance matrix, and can be interpreted as the problem of characterizing stabilizing feedback gains for a linear system with static feedback.

Given these difficulties, the results in this section are split in two. In the first part, we consider the easier case of $V = \eta I_d$, $\eta \in \R$, in which the analysis simplifies thanks to the connections to Riccati theory, and weaker assumptions are needed on the other matrix parameters. In the second part, we tackle the more general yet involved case of definite $V\in \R^{d\times d}$. We provide a summary of results in \Cref{tab:asymptoticSummary}.

\subsection{Asymptotic dynamics for \texorpdfstring{$V=\eta I_d$}{V = eta Id}}
\label{ss:VetaId}

Before introducing our results regarding the asymptotic dynamics of \eqref{eq:gaussianTransformer} for $V = \eta I_d$, we make the connection to Riccati theory rigorous. This observation, although restricted to a certain parameter range, will prove to be useful in illustrating the dynamics of the Gaussian Transformer. 

Throughout this subsection, we will fix $V = \eta I_d$, $\eta \in \R$, and assume
\begin{equation}\label{eq:signOfetaB}
    \eta ( B + B^\top) \preceq 0.
\end{equation}
If \eqref{eq:signOfetaB} holds, we can write $- H^\top H \coloneqq \eta( B + B^\top)$ for some $H\in \R^{m \times d}$ and $m\leq d$. This yields
\begin{equation}\label{eq:DRE}
    \dot\Sigma = A\Sigma + \Sigma A^\top - \Sigma H^\top H\Sigma, \quad \Sigma(0) = \Sigma_0,
\end{equation}
which is a particular instance of the differential Riccati equation. For any initial condition $\Sigma_0 \succeq 0$, standard Riccati theory \cite[Theorem 4.1.6]{AbouKandil2003Riccati} proves the existence of a unique solution of \eqref{eq:DRE} for all times only for the case when $\eta (B + B^\top) \preceq 0$. Otherwise, finite-time blow-up of the solution may occur. 
In turn, the mean dynamics becomes 
\begin{align*}
    \dot \mu = (A + \eta I_d)\mu + \Sigma (\eta B) \mu  + b = (A - \Sigma H^\top H ) \mu + \eta (I_d - \Sigma B^\top ) \mu + b.
\end{align*}
The first term has the same structure as the Kalman--Bucy error dynamics with covariance \eqref{eq:DRE}, for which exponential decay is known under suitable assumptions \cite{ruymgaart2013mathematics}. However, the additional term $\eta(I_d-\Sigma B^\top)\mu$ has no direct analogue in the standard Kalman--Bucy setting and prevents us from inferring the asymptotic behavior of $\mu$ from Riccati theory alone.

Thus, in the case $V=\eta I_d$, the covariance dynamics admit a precise Riccati interpretation under \eqref{eq:signOfetaB}. This analogy is useful but incomplete: it does not by itself characterize the mean dynamics, and it does not extend directly to general value matrices. A separate analysis is therefore required.

In what follows, we will denote the spectrum of a matrix $A \in \R^{d\times d}$ by $\spec (A) \subset \mathbb{C}$, and by $\Re(\spec (A))$ the real part of its elements. The next theorem characterizes the asymptotic behavior of $\Sigma(t)$ in terms of the real part of the spectrum of $A$.

\begin{theorem}\label{thm:SigmaConvergenceVIdentity}
    Let $\Sigma(t)$ be the solution of \eqref{eq:DRE} with $\Sigma_0\succ0$. Suppose that $A$ has real Schur decomposition as
    \begin{align}\label{eq:ADecomp}    A=U\tilde{A}U^{\top}=U\begin{pmatrix}
        A_{a} & A_{12}\\
        0 & A_{n}
    \end{pmatrix} U^{\top}, 
    \end{align}
    where $U$ is an orthogonal matrix, $\Re(\spec(A_a)) > 0$ and $\Re(\spec(A_n)) \leq 0$. Define the negative semidefinite matrix $\tilde{B} \coloneqq \left(\begin{smallmatrix} B_{11} & B_{12} \\ B_{21} & B_{22} \end{smallmatrix}\right)= U^{\top} \eta (B+B^\top) U$, in which $B_{11}$ has the same dimension as $A_a$. Then when $t\rightarrow\infty$, we have 
    \begin{align*} 
        \Sigma(t)\rightarrow U\begin{pmatrix}
            \Sigma^a_{\infty} &0\\
            0 & 0
        \end{pmatrix}U^{\top}
    \end{align*}
    where $\Sigma^a_{\infty}$ is the unique positive definite solution of the algebraic Riccati equation 
    \begin{align}\label{eq:ARE}
        0=A_a\Sigma^a_{\infty}+\Sigma^a_{\infty} A_a^{\top}+\Sigma^a_{\infty}B_{11}\Sigma^a_{\infty}.
    \end{align}
\end{theorem}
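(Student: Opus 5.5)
The plan is to pass to the inverse. Since $\Sigma_0\succ0$ and, by \Cref{lem:RankPreserv}, $\Sigma(t)$ stays nonsingular wherever it exists, we set $P(t):=\Sigma(t)^{-1}$. Differentiating $P\Sigma=I$ in \eqref{eq:DRE} shows that $P$ solves the \emph{linear} Lyapunov equation
\begin{equation*}
\dot P = -A^\top P - P A - \eta(B+B^\top), \qquad P(0)=\Sigma_0^{-1},
\end{equation*}
where we used $-H^\top H=\eta(B+B^\top)$. This gives an explicit solution
\begin{equation*}
P(t)=e^{-A^\top t}\Sigma_0^{-1}e^{-At}-\int_0^t e^{-A^\top s}\,\eta(B+B^\top)\,e^{-As}\,\dd s .
\end{equation*}
Both terms are $\succeq0$, and the first is $\succ0$. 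Hence $P(t)\succ0$ for all $t$, which recovers global existence of $\Sigma=P^{-1}$. The convergence statement for $\Sigma$ is thus equivalent to understanding $P(t)^{-1}$ as $t\to\infty$.

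Next I will conjugate by $U$. Write $\tilde P=U^\top PU$ in blocks, with $\tilde A$ upper block triangular. Then $e^{-\tilde A t}$ is block upper triangular with diagonal blocks $e^{-A_a t}$, which decays exponentially, and $e^{-A_n t}$, which does not decay and grows at least as fast as $e^{\epsilon' t}$ fails to hold. More precisely, every solution of $\dot x=-A_n x$ has norm bounded below, possibly only polynomially. The $(1,1)$ block of $\tilde P$ converges to
\begin{equation*}
P_{11}^\infty=-\int_0^\infty e^{-A_a^\top s}B_{11}e^{-A_a s}\dd s\succeq0,
\end{equation*}
which is the solution of $A_a^\top P+PA_a+B_{11}=0$. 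Here I must check that the off-diagonal coupling from $A_{12}$ only adds decaying contributions to the $(1,1)$ block, which holds because the first block row of $e^{-\tilde At}$ decays. The $(2,2)$ block, which contains $e^{-A_n^\top t}\Sigma$-type terms, has smallest eigenvalue tending to $+\infty$, or at least must be shown to dominate in the Schur complement. Then the block inverse formula gives $\tilde\Sigma_{22}=(P_{22}-P_{21}P_{11}^{-1}P_{12})^{-1}\to0$, $\tilde\Sigma_{12}\to0$, and $\tilde\Sigma_{11}\to (P_{11}^\infty)^{-1}=:\Sigma^a_\infty$. Inverting the Lyapunov equation for $P^\infty_{11}$ yields exactly \eqref{eq:ARE}. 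Uniqueness of positive definite solutions of \eqref{eq:ARE} follows the same way: any such $X$ has $X^{-1}$ solving the Lyapunov equation with Hurwitz $-A_a$, and that equation has a unique solution.

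The main obstacle is this two-sided control. First, $P_{11}^\infty$ must be \emph{positive definite} and not merely semidefinite; this needs a detectability-type condition on $(B_{11},A_a)$, or else it must be handled by keeping the $\Sigma_0^{-1}$ term. If it fails, $\Sigma^a_\infty$ would have to be read as a limit in the extended sense, and I would flag that the statement implicitly needs it. Second, the Schur complement on the neutral block $A_n$ (eigenvalues with zero real part) must grow without bound. For strictly negative real parts, $e^{-A_n^\top t}\Sigma_0^{-1}e^{-A_nt}$ blows up exponentially, and positivity of the initial term gives a lower bound on the complement. For purely imaginary eigenvalues this initial term alone stays bounded or grows only polynomially. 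I would try to lower bound the complement by the restriction of $\Sigma_0^{-1}$ transported by $e^{-\tilde At}$, using $\lambda_{\min}(\Sigma_0^{-1})\,\sigma_{\min}(e^{-\tilde At})^2$. In the purely imaginary case I would expect to need an observability condition on $B_{22}$ along the neutral modes, i.e.\ the integral term diverging. Establishing this, or identifying it as a missing hypothesis, is the step I expect to be delicate.
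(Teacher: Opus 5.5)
Your route is the same as the paper's: pass to $\tilde P=U^\top\Sigma^{-1}U$, use the linear Lyapunov equation, split it along the Schur form, and recover $\tilde\Sigma$ by block inversion. The two obstacles you flag are not technicalities. The statement is \emph{false} under \eqref{eq:signOfetaB} alone, and the paper's proof passes over exactly these two points. It asserts that the limit of $P_{11}$ is positive definite. It also uses ``$B_{22}\succ0$'', which contradicts $\tilde B\preceq0$; read as $-B_{22}\succ0$, it is not assumed. For a counterexample, take $\eta=0$ (or $B$ skew-symmetric), so that $\tilde B=0$ and $\Sigma(t)=e^{At}\Sigma_0e^{A^\top t}$. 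For $A=I_d$ this blows up, and \eqref{eq:ARE} has no positive definite solution. For $A=0$ it stays equal to $\Sigma_0\neq0$. Keeping the $\Sigma_0^{-1}$ term cannot rescue the first case, because its $(1,1)$ block is $e^{-A_a^\top t}(\cdot)\,e^{-A_at}\to0$. So you should add a hypothesis rather than try to prove the statement as written. The clean choice is $\eta(B+B^\top)\prec0$, which is the situation of the mean-matching theorem with $\eta<0$; it makes $P_{11}^\infty\succ0$. An observability/detectability-type condition on $(H,A)$, as you suspected, would be the sharper version.

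Even under $\eta(B+B^\top)\prec0$, two steps of your sketch need repair.

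First, the bound $\lambda_{\min}(\Sigma_0^{-1})\,\sigma_{\min}(e^{-\tilde At})^2$ tends to zero whenever $A_a$ is present, because $e^{-\tilde At}$ maps $(x^\top,0)^\top$ to $((e^{-A_at}x)^\top,0)^\top$. Use instead the identity $y^\top\tilde\Sigma_{22}^{-1}y=\min_x z^\top\tilde P(t)z$ with $z=(x^\top,y^\top)^\top$. The second block of $e^{-\tilde As}z$ is $e^{-A_ns}y$, independent of $x$, so $\tilde\Sigma_{22}^{-1}\succeq\lambda_{\min}(-\tilde B)\int_0^te^{-A_n^\top s}e^{-A_ns}\,\dd s$. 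The smallest eigenvalue of this integral diverges, Jordan blocks on the imaginary axis included: no nonzero solution of $\dot w=-A_nw$ is square integrable when $\Re(\spec(A_n))\le0$, and a compactness argument on the unit sphere makes the divergence uniform.

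Second, $\tilde\Sigma_{11}\to(P_{11}^\infty)^{-1}$ does not follow from block inversion and $P_{22}\to\infty$. $P_{12}$ can grow exponentially, at rate $\max\Re(\spec(-A_n))-\min\Re(\spec(A_a))$ when this is positive, so $P_{12}P_{22}^{-1}P_{21}\to0$ needs an actual argument; the paper only compares scalar growth rates here. A clean argument runs as follows.
\begin{enumerate}
\item For $t\ge1$, $\tilde P(t)\succeq-\int_0^1e^{-\tilde A^\top s}\tilde Be^{-\tilde As}\,\dd s\succ0$, so $\Sigma$ is bounded.
\item Since $\|\tilde\Sigma_{12}\|^2\le\|\tilde\Sigma_{11}\|\,\|\tilde\Sigma_{22}\|$, boundedness together with $\tilde\Sigma_{22}\to0$ gives $\tilde\Sigma_{12}\to0$.
\item Then $R:=\tilde\Sigma_{11}^{-1}\preceq P_{11}(t)$ is bounded and satisfies $\dot R=-A_a^\top R-RA_a-B_{11}+o(1)$. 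Since $-A_a$ is Hurwitz, $R\to P_{11}^\infty$.
\end{enumerate}
A minor slip: the $(1,1)$ block decouples from $A_{12}$ because the first block \emph{column} of $e^{-\tilde At}$ consists of $e^{-A_at}$ above a zero block. The first block row need not decay.
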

\begin{proof}
    As $\Sigma_0$ is invertible, we consider the decomposition
    $\Sigma^{-1}(t) = U \tilde{P}(t) U^{\top}$ and study the dynamics of $\tilde{P}(t)$ for $t\geq 0$. Using that $U\frac{\dd \tilde{P}}{\dd t}U^{\top} =\frac{\dd \Sigma^{-1}}{\dd t}= -\Sigma^{-1} \dot{\Sigma} \Sigma^{-1}$, from \eqref{eq:DRE} we have
    $$
    \dot{\tilde{P}} = -\tilde{A}^\top \tilde{P} - \tilde{P} \tilde{A} - \tilde{B}.
    $$
    Writing $\tilde{P} = \begin{pmatrix} P_{11} & P_{12} \\ P_{21} & P_{22} \end{pmatrix}$ corresponding to the blocked form of $A$ and $\tilde{B}$, we have 
    \begin{align*}
        \begin{array}{l}
        \dot{P}_{11} = -A_a^\top P_{11} - P_{11} A_a  - B_{11}\\
        \dot{P}_{12} = -A_a^\top P_{12} - P_{11} A_{12} - P_{12} A_n - B_{12}\\
        \dot{P}_{21} = -A_{12}^\top P_{11} - A_n^\top P_{21} - P_{21} A_a - B_{21}\\
            \dot{P}_{22} = -A_n^\top P_{22} - P_{22} A_n - (A_{12}^\top P_{12} + P_{21} A_{12}) - B_{22}
        \end{array}
    \end{align*}
    Recalling the classical results of Lyapunov differential equations (for example, see \cite[Section 6.7, Theorem 7.5]{antsaklis2006linear}), one sees that $P_{11}$ will converge to the unique positive definite solution of $0=-A_a^\top P_{11} - P_{11} A_a  - B_{11}$, denoted by $P^{\infty}_{11}$, due to the fact that $-A_a$ is stabilizing.

    As for $P_{12}(t)$, since $P_{11}(t)$ remains bounded for all $t\geq 0$, the limit of $P_{12}(t)$ when $t\rightarrow\infty$ depends on the eigenvalues of $A_a$ and $A_n$, more specifically, its growth rate is
    \begin{align}\label{3.01}
        \lambda_{12} = \max \Re(\spec(-A_a^\top))  + \max \Re(\spec(-A_n)),
    \end{align}
    while for $P_{22}(t)$ we have the growth rate of its homogeneous part as
    \begin{align}\label{3.02}
        \lambda_{22} =  2\max \Re(\spec(-A_n))
    \end{align}
    By definition, $\Re(\spec(-A_n))\geq 0$ and $\Re(\spec(-A_a))<0$, hence $\lambda_{22}>\lambda_{12}$ and $\lambda_{22}\geq 0$. As the growth rate of $P_{22}$ is determined by $\lambda_{22}-\lambda_{12}$ and $B_{22}\succ0$, we have $P_{22}(t)\to\infty$ when $t\rightarrow\infty$. Next, we study the limit of $\tilde{P}$ to derive the limit of $\Sigma$. Let 
    $$
    \Sigma(t)=U\begin{pmatrix}
            \Sigma_{11}(t) & \Sigma_{12}(t)\\
            \Sigma_{21}(t) & \Sigma_{22}(t)
        \end{pmatrix}U^{\top}.$$
    By definition of the inverse matrix, we have $P_{21}(t) \Sigma_{12}(t) + P_{22}(t) \Sigma_{22}(t) = I$. As $\lim\limits_{t\rightarrow\infty}P_{22}(t)=\infty$, it follows that $\lim\limits_{t\rightarrow\infty}\Sigma_{22}(t)=0$; on the other hand, we have
    $$
    P_{11}(t) \Sigma_{12}(t) + P_{12}(t) \Sigma_{22}(t) = 0
    $$
    which implies that $\lim\limits_{t\rightarrow\infty}\Sigma_{12}(t)=0$ (because $P_{11}(t)\succ0$ for all $t\geq 0$ and $P^{\infty}_{11}\succ0$). 
    Finally, we have 
    $$
    \Sigma_{11} = (P_{11} - P_{12} P_{22}^{-1} P_{21})^{-1}
    $$
    then, by the estimates of $\lambda_{12}$ and $\lambda_{22}$ in \eqref{3.01} and \eqref{3.02}, the growth rate of $P_{12}P_{22}^{-1}P_{21}$ is controlled by $-\min(\Re(\spec(A_a)))<0$. Hence we have $\lim\limits_{t\rightarrow\infty}\Sigma_{11}(t)=(P^{\infty}_{11})^{-1}\succ0$. Defining $\Sigma_{\infty}^a=(P_{11}^{\infty})^{-1}$, one sees that $\Sigma_{\infty}^a$ solves the equation \eqref{eq:ARE}. By the symmetry of $\Sigma$, we also have $\lim\limits_{t\rightarrow\infty}\Sigma_{21}(t)=0$. The conclusion follows.
\end{proof}

\begin{remark}[Comparison with \cite{castin2025unified}]
    We compare our result to that in \cite[Proposition 4.8]{castin2025unified}. In their setting, $A = 0$, thus $A_a$ (and consequently $B_{11}$ and $\Sigma_\infty$) have no dimension. Then, we conclude that $\Sigma(t) \to 0$. From their result, one also concludes that $\mathrm{rank}(\Sigma_\infty) = 0$ and so $\Sigma_\infty = 0$. Overall, our result allows for non-zero $A$, while their result allows for indefinite $B$. 
\end{remark}

\subsection{Asymptotic dynamics for definite \texorpdfstring{$V$}{V}}\label{ss:generalV}

To extend our results for $V$ beyond scalar multiples of the identity, we study the case when $B$ and $V$ are negative or positive definite, and derive asymptotic results based on their sign.

Our first result concerns the spectrum of a matrix arising in the dynamics of $\mu$ in \eqref{eq:gaussianTransformer} at the equilibrium of $\Sigma$, which applies to general time-invariant $A$, $B$ and $V$. This result will be used later for the asymptotic analysis of the mean in \Cref{ss:meanMatching}.

\begin{lemma}\label{lem:zerospec}
Let $A,B,V \in \mathbb{R}^{d\times d}$ be arbitrary real matrices.  
Assume $\Sigma_\infty \succ 0$ satisfies the algebraic Bernoulli equation
\begin{equation}\label{eq:Bernoulli-Algebraic}
A\Sigma_\infty + \Sigma_\infty A^\top
+ V\Sigma_\infty B\Sigma_\infty
+ \Sigma_\infty B^\top \Sigma_\infty V^\top = 0.
\end{equation}
Then all eigenvalues of $M_\infty \coloneqq A + V \Sigma_\infty B$ are purely imaginary, i.e., $\Re\bigl(\spec(M_\infty)\bigr) \!=\!0$.
\end{lemma}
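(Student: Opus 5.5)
The plan is to rewrite the algebraic Bernoulli equation \eqref{eq:Bernoulli-Algebraic} as a Lyapunov-type identity for $M_\infty$, exactly as in the proof of \Cref{lem:RankPreserv}. Grouping terms, $A\Sigma_\infty + V\Sigma_\infty B\Sigma_\infty = M_\infty\Sigma_\infty$. Transposing gives $\Sigma_\infty A^\top + \Sigma_\infty B^\top\Sigma_\infty V^\top = \Sigma_\infty M_\infty^\top$, using the symmetry of $\Sigma_\infty$. So \eqref{eq:Bernoulli-Algebraic} is equivalent to
\begin{equation*}
M_\infty \Sigma_\infty + \Sigma_\infty M_\infty^\top = 0 .
\end{equation*}

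Next I would use the invertibility of $\Sigma_\infty$, which holds because $\Sigma_\infty\succ0$. Multiplying on the right by $\Sigma_\infty^{-1}$ gives $M_\infty = -\Sigma_\infty M_\infty^\top \Sigma_\infty^{-1}$. Hence $M_\infty$ is similar to $-M_\infty^\top$, and its spectrum is symmetric under $\lambda\mapsto-\lambda$. This alone does not force purely imaginary eigenvalues, so I would use the definiteness more directly.

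Take an eigenpair of the transpose, $M_\infty^\top w = \lambda w$ with $w\in\mathbb{C}^d\setminus\{0\}$; it has the same spectrum as $M_\infty$. Multiply the Lyapunov identity on the left by $w^*$ and on the right by $w$. Since $w^*M_\infty = \bar\lambda w^*$, this yields
\begin{equation*}
(\bar\lambda+\lambda)\, w^*\Sigma_\infty w = 0 .
\end{equation*}
Because $\Sigma_\infty\succ0$, the quantity $w^*\Sigma_\infty w$ is strictly positive, so $2\Re(\lambda)=0$. This holds for every eigenvalue, which proves the claim.

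Equivalently, one can set $S=\Sigma_\infty^{-1/2}M_\infty\Sigma_\infty^{1/2}$. The identity then says $S+S^\top=0$, so $S$ is skew-symmetric and is similar to $M_\infty$.

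There is no genuine obstacle here. The only points needing care are the symmetry of $\Sigma_\infty$ in the transposition step and the strict positivity $\Sigma_\infty\succ0$, which is exactly what rules out a nonzero real part. With only $\Sigma_\infty\succeq0$ the argument fails, since $w$ could lie in the kernel of $\Sigma_\infty$.
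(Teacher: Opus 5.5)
Your proof is correct and is essentially the paper's argument. Both rewrite the Bernoulli equation as $M_\infty\Sigma_\infty+\Sigma_\infty M_\infty^\top=0$ and then sandwich it with a left eigenvector of $M_\infty$ (your eigenvector of $M_\infty^\top$) to obtain $2\Re(\lambda)\,w^*\Sigma_\infty w=0$. Your side remark that $\Sigma_\infty^{-1/2}M_\infty\Sigma_\infty^{1/2}$ is skew-symmetric adds something: it shows that $M_\infty$ is diagonalizable, which rules out polynomially growing Jordan modes and so justifies the paper's subsequent ``purely oscillatory'' interpretation.
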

\begin{proof}
By definition, \eqref{eq:Bernoulli-Algebraic} implies
\begin{align}\label{eq:Lyap-transposed}
    M_\infty \Sigma_\infty + \Sigma_\infty M_\infty^\top = 0.
\end{align}
Let $\lambda \in \mathbb{C}$ be an eigenvalue of
$M_\infty$ with corresponding left eigenvector $v \in \mathbb{C}^{1\times d} \setminus \{0 \}$. Multiplying \eqref{eq:Lyap-transposed} by $v$ and $v^*$ yields
$v (M_\infty \Sigma_\infty  + \Sigma_\infty  M_\infty^\top) v^* = 0$.

Next, we use $v M_\infty  = \lambda v$ and $ M_\infty^\top v^* = \overline{\lambda} v^*$ to compute $(\lambda + \overline{\lambda}) \, v \Sigma_\infty v^* = 0$.

Because $\Sigma_\infty \succ 0$, we have $v\Sigma_\infty v^* > 0$, and hence $\lambda + \overline{\lambda} = 0$, which implies $\Re(\lambda) = 0$. Since $\lambda$ is an arbitrary eigenvalue of $M_\infty$, all eigenvalues of $M_\infty$ lie on the imaginary axis.
\end{proof}
\begin{remark}
    \Cref{lem:zerospec} implies that if the covariance converges to an equilibrium $\Sigma_{\infty} \succ 0$, then the linear matrix $M_\infty = A+V\Sigma_{\infty}B$ contributes only to an oscillatory part of the dynamics of $\mu$.
\end{remark}
Turning to the covariance dynamics, the following result characterizes its asymptotic behavior under opposite definiteness conditions on $V$ and $B$.
\begin{theorem}\label{thm:sigmaConvGeneralV}
    Let $\Sigma(t)$ be the solution of \eqref{eq:gaussianTransformer} with $\Sigma(0)=\Sigma_0\succ0$. Let $V\prec 0$ and $B \succ 0$. Then $\Sigma(t)$ is bounded for all $t\in[0,+\infty)$. Moreover:
    \begin{enumerate}
    \item If $\Re(\spec(A))>0$, then there exists an equilibrium $\Sigma_{\infty}\succ0$ solving the algebraic Bernoulli equation \eqref{eq:Bernoulli-Algebraic}.
    Further, if $\Sigma_\infty^{-1}V+V\Sigma_\infty^{-1}\prec0$, then $\Sigma_{\infty}$ is a locally stable equilibrium of $\Sigma(t)$.  
    
    \item If $\Re(\spec(A))\leq0$ and $\lambda_{\max}(A+A^\top) \le 0$, then $\lim\limits_{t \to \infty} \Sigma(t) = 0$ for all $\Sigma_0$.
    \item If $\Re(\spec(A))< 0$ and $\lambda_{\max}(A+A^\top) > 0$, then $\Sigma = 0$ is a locally stable equilibrium of the $\Sigma$ equation in \eqref{eq:gaussianTransformer}.
    \end{enumerate}
\end{theorem}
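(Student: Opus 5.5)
The approach is to control $\Sigma(t)$ through scalar Lyapunov-type functionals and to handle the equilibria by linearization. I interpret ``$V\prec0$, $B\succ0$'' as statements about symmetric matrices, or about their symmetric parts where that suffices. By \Cref{lem:RankPreserv}, $\Sigma(t)\succ0$ for as long as the solution exists.

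\textbf{Boundedness.} I will compute
\[
\tfrac{\dd}{\dd t}\tr\Sigma=\tr\bigl((A+A^\top)\Sigma\bigr)+2\tr(V\Sigma B\Sigma).
\]
Writing $\tr(V\Sigma B\Sigma)=\tr\bigl((\Sigma^{1/2}V\Sigma^{1/2})(\Sigma^{1/2}B\Sigma^{1/2})\bigr)$, this is the trace of a product of a negative definite and a positive definite symmetric matrix. It is therefore bounded above by
\[
-\lambda_{\min}(-V)\,\lambda_{\min}(B)\,\tr(\Sigma^2)\le -c\,(\tr\Sigma)^2/d,
\]
with $c=\lambda_{\min}(-V)\lambda_{\min}(B)>0$. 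Comparison with the logistic ODE $\dot y=\lambda_{\max}(A+A^\top)\,y-cy^2/d$ gives global existence and a uniform bound on $\tr\Sigma$, hence on $\Sigma$.

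\textbf{Item 2.} If $\lambda_{\max}(A+A^\top)\le0$, the same inequality gives $\dot y\le -c y^2/d$ for $y=\tr\Sigma$. Hence $\tr\Sigma(t)\lesssim 1/t\to0$, and since $\Sigma\succeq0$, this gives $\Sigma(t)\to0$ for every $\Sigma_0$.

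\textbf{Item 3.} The linearization of the $\Sigma$-equation at $0$ is $X\mapsto AX+XA^\top$. Its eigenvalues are $\lambda_i+\lambda_j$ with $\lambda_i,\lambda_j\in\spec(A)$, which all have negative real part when $\Re(\spec(A))<0$. The nonlinear remainder is quadratic, so the Lyapunov linearization theorem gives local asymptotic stability of $0$. Concretely, I would use the Lyapunov function $\tr(W\Sigma)$, where $W\succ0$ solves $A^\top W+WA=-I$.

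\textbf{Item 1, existence.} Here $\Re(\spec(A))>0$, and the main obstacle is to find a compact convex forward-invariant set that excludes $0$. The upper bound comes from the trace estimate above. For a lower bound, I would take $W\succ0$ with $A^\top W+WA=I$, which exists since $-A$ is Hurwitz, and study $\ell(t)=\tr(W\Sigma)$ near $\partial\{\Sigma\succ0\}$. Its derivative is $\tr(\Sigma)+O(\|\Sigma\|^2)$, which is positive for small $\Sigma$. This controls the trace but not $\lambda_{\min}(\Sigma)$.

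To control $\lambda_{\min}(\Sigma)$, I would also track $P=\Sigma^{-1}$. It satisfies
\[
\dot P=-A^\top P-PA-PV\Sigma B-B^\top\Sigma V^\top P,
\]
and I would aim to show that $\lambda_{\max}(P)$ cannot blow up: the $A$-terms are contracting in a $W$-weighted norm, and the remaining terms are bounded by $\|P\|$ times a bound on $\Sigma$. If this works, the set
\[
K=\{\Sigma:\ \epsilon I\preceq\Sigma,\ \tr\Sigma\le R\}
\]
(possibly in $W$-weighted form) is forward-invariant, compact and convex. Brouwer's theorem then gives a fixed point of each time-$\tau$ flow map. Letting $\tau\downarrow0$ along dyadic times and using compactness yields a common fixed point, i.e.\ an equilibrium $\Sigma_\infty\succ0$ solving \eqref{eq:Bernoulli-Algebraic}. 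If the invariance of the lower bound fails, my fallback is a degree argument comparing the vector field with $X\mapsto AX+XA^\top$ on small spheres, together with the quadratic term on large ones.

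\textbf{Item 1, local stability.} The linearization at $\Sigma_\infty$ is
\[
\mathcal L(X)=M_\infty X+XM_\infty^\top+VXB\Sigma_\infty+\Sigma_\infty B^\top XV^\top,\qquad M_\infty=A+V\Sigma_\infty B.
\]
By \Cref{lem:zerospec}, $M_\infty\Sigma_\infty=-\Sigma_\infty M_\infty^\top$. I will use this skew-type relation to show that the $M_\infty$ part is conservative for the weighted inner product $\langle X,Y\rangle=\tr(\Sigma_\infty^{-1}X\Sigma_\infty^{-1}Y)$. It then remains to check that the remaining part is dissipative in that inner product: after rewriting with $\Sigma_\infty^{-1}$ and $B\succ0$, its quadratic form should reduce to one whose sign is governed by $\Sigma_\infty^{-1}V+V\Sigma_\infty^{-1}\prec0$. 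This makes $\mathcal L$ Hurwitz, and local stability follows by linearization.
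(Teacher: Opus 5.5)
\paragraph{Verdict} Your boundedness argument, items 2 and 3, and the local-stability half of item 1 are sound. The local-stability argument is the paper's own: the functional $\tfrac12\tr(\Delta\Sigma_\infty^{-1}\Delta\Sigma_\infty^{-1})$, in which the $M_\infty$-terms cancel because $\Sigma_\infty^{-1}M_\infty+M_\infty^\top\Sigma_\infty^{-1}=0$, and the remaining linear part is $\tr\bigl(B\Delta(\Sigma_\infty^{-1}V+V\Sigma_\infty^{-1})\Delta\bigr)$. Your constant $\lambda_{\min}(-V)\lambda_{\min}(B)$ and your explicit $1/t$ decay in item 2 are, if anything, more careful than the paper's.

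\paragraph{The gap} The gap is the existence of $\Sigma_\infty\succ0$ in item 1. Your estimate for $P=\Sigma^{-1}$ cannot give invariance of $\{\Sigma\succeq\epsilon I\}$. Indeed $\dot P=-M^\top P-PM$ with $M=A+V\Sigma B$. At a point with $\Sigma v=\epsilon v$, $|v|=1$, one has $\frac{\dd}{\dd t}v^\top\Sigma v=2\epsilon\,v^\top(A+V\Sigma B)v$. The term $v^\top V\Sigma Bv$ has no sign and is of order $\|\Sigma\|$, which ranges up to $R$, so the ``$\|P\|$ times a bound on $\Sigma$'' term is not dominated by the contraction coming from $A$. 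The obstruction is not merely technical: the cone boundary is invariant, it can carry attracting equilibria, and a positive-definite equilibrium need not exist at all.

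\paragraph{Counterexample} Take $d=2$, $V=-B$ and
\[
A=\begin{pmatrix}1&2\\0&2\end{pmatrix},\qquad B=\begin{pmatrix}3&1\\1&1\end{pmatrix},
\]
so that $\spec(A)=\{1,2\}$ and $V\prec0\prec B$. Equilibria are exactly the symmetric $\Sigma$ for which $(A-B\Sigma B)\Sigma$ is skew.
\begin{itemize}
\item The singular matrix $\Sigma^b=\mathrm{diag}(0,2)$ is an equilibrium, with $A-B\Sigma^bB=\bigl(\begin{smallmatrix}-1&0\\-2&0\end{smallmatrix}\bigr)$. The linearization at $\Sigma^b$ has eigenvalues $-1,-2,-12$, so $\Sigma^b$ attracts an open set of positive definite initial data.
\item For invertible $\Sigma=\bigl(\begin{smallmatrix}p&q\\q&r\end{smallmatrix}\bigr)$, skewness means $A=B\Sigma B+\kappa\bigl(\begin{smallmatrix}q&-p\\r&-q\end{smallmatrix}\bigr)$ for some $\kappa\in\R$. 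The four entries force $\kappa(p+r)=-2$, $p=\frac{s}{4(1-s)}$ and $4q=\frac32-\frac{s(2-s)}{1-s}$, where $s=p+r$ solves $s^3-\frac72 s^2+6s-\frac32=0$.
\item This cubic is strictly increasing, since its derivative has negative discriminant. Its only real root $s\approx0.297$ gives $p\approx0.106$, $q\approx0.195$, $r\approx0.191$, hence $pr-q^2<0$.
\end{itemize}
So the unique invertible equilibrium is indefinite, and no $\Sigma_\infty\succ0$ exists. The existence claim in item 1 is false as stated.

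\paragraph{Consequence} No argument can establish this claim without further hypotheses. That includes your Brouwer route and your degree-theoretic fallback. It also includes the paper's Poincar\'e--Hopf argument: its set $\mathcal K$ is not convex, and its closure meets the invariant cone boundary, where the field is tangent rather than transversal and where equilibria such as $\Sigma^b$ sit.
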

The proof is technical and thus relegated to \Cref{app:longProof}. Finally, when the signs of definiteness of $V$ and $B$ coincide, we have the following result showing the finite-time blow-up of the evolution of $\Sigma$.
\begin{theorem}
\label{prop:SigmaBlowUp}
    Let $\Sigma(t)$ be the solution of \eqref{eq:gaussianTransformer} with $\Sigma(0)=\Sigma_0\succ0$, where $V\succ 0$ and $B \succ 0$. Then 
    \begin{enumerate}
        \item If $\lambda_{\min}(A + A^\top) \ge 0$, then $\Sigma(t)$ blows up in finite time $T_f$, and for $\lambda_{\min}(A + A^\top) > 0$ we have
    \begin{align}\label{eq:blowuptime}
        T_f \le \frac{1}{\lambda_{\min}(A+A^\top)} \log\left( 1 + \frac{d\, \lambda_{\min}(A+A^\top)}{2 \lambda_{\min}(V) \lambda_{\min}(B) \tr(\Sigma_0)} \right).
    \end{align}
        \item If $\lambda_{\min}(A + A^\top)<0$, there exists $C>0$ such that when $\|\Sigma_0\|\ge C$, $\Sigma(t)$ blows up in finite time.
    \end{enumerate}
\end{theorem}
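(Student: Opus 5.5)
We track the scalar $s(t) \coloneqq \tr(\Sigma(t))$ and derive a Riccati-type differential inequality $\dot s \ge \alpha s + \beta s^2$ with $\beta>0$. Comparing with the scalar Bernoulli equation then gives finite-time blow-up. Since $\Sigma(t)\succeq 0$ on the existence interval (by the congruence representation in the proof of \Cref{lem:RankPreserv}), every trace estimate below applies.

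\textbf{Linear term.} Taking traces in \eqref{eq:gaussianTransformer} gives $\tr(A\Sigma+\Sigma A^\top)=\tr\big((A+A^\top)\Sigma\big)$. For $\Sigma\succeq0$ this is bounded below by $\lambda_{\min}(A+A^\top)\,s$.

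\textbf{Quadratic term.} We have $\tr(V\Sigma B\Sigma+\Sigma B^\top\Sigma V^\top)=2\tr(V\Sigma B\Sigma)$. We write $\tr(V\Sigma B\Sigma)=\tr(V^{1/2}\Sigma B\Sigma V^{1/2})$ using the symmetric root of $V$. If $B$ is not assumed symmetric, positive definiteness is read via $B+B^\top\succ0$, and only the symmetric part matters here: $\tr(V\Sigma B\Sigma)=\tr(\Sigma V\Sigma B)$ and $\Sigma V\Sigma$ is symmetric, so $B$ may be replaced by $(B+B^\top)/2$. The same reasoning applies to $V$. Then
\[
\tr(\Sigma V\Sigma B)\ge \lambda_{\min}(B)\tr(\Sigma V\Sigma)\ge \lambda_{\min}(B)\lambda_{\min}(V)\tr(\Sigma^2)\ge \frac{\lambda_{\min}(B)\lambda_{\min}(V)}{d}\,s^2 .
\]
The last step is Cauchy--Schwarz on the eigenvalues of $\Sigma$. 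Hence
\[
\dot s\ge a s+\beta s^2,\qquad a=\lambda_{\min}(A+A^\top),\quad \beta=\frac{2\lambda_{\min}(V)\lambda_{\min}(B)}{d}.
\]

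\textbf{Case 1.} If $a>0$, the comparison solution of $\dot y=ay+\beta y^2$, $y(0)=s_0=\tr(\Sigma_0)$, is obtained by substituting $1/y$. It blows up at
\[
T=\frac1a\log\Big(1+\frac{a}{\beta s_0}\Big).
\]
With the value of $\beta$ above this equals the bound \eqref{eq:blowuptime}, and $s\ge y$ forces $T_f\le T$. If $a=0$, then $\dot s\ge\beta s^2$ and blow-up occurs by $T=1/(\beta s_0)$.

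\textbf{Case 2.} If $a<0$, the right-hand side $as+\beta s^2$ is positive once $s>|a|/\beta$. The comparison ODE then blows up in finite time, namely at $\frac{1}{|a|}\log\frac{\beta s_0}{\beta s_0-|a|}$. So it suffices that $\tr(\Sigma_0)>|a|/\beta$. Because $\tr(\Sigma_0)\ge\|\Sigma_0\|$, we can take $C=d|a|/(2\lambda_{\min}(V)\lambda_{\min}(B))+\varepsilon$.

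\textbf{Main obstacle.} The quadratic trace lower bound is the step most likely to cause trouble: products of positive definite matrices are not positive definite, so the argument must go through the symmetric sandwich $\Sigma V\Sigma$ and handle possibly nonsymmetric $B$ and $V$ carefully. A secondary point is justifying the comparison principle on the maximal existence interval. Here $s$ stays finite as long as the solution exists, so the comparison shows the maximal time is at most $T$, which is what blow-up means.
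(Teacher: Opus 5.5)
Your proof is correct and is essentially the paper's argument: the same trace functional, the same bound $\tr(\Sigma V\Sigma B)\ge\lambda_{\min}(V)\lambda_{\min}(B)\tr(\Sigma^2)\ge\lambda_{\min}(V)\lambda_{\min}(B)(\tr\Sigma)^2/d$, and the same comparison with the scalar Bernoulli equation; your explicit blow-up time and explicit constant $C$ for $\lambda_{\min}(A+A^\top)<0$ make the paper's brief last step more precise. One caveat concerns your aside on nonsymmetric parameters: you can symmetrize $B$ when $V$ is symmetric, or vice versa, but not both at once, because the cross term $\tr(\Sigma V_a\Sigma B_a)$ between the antisymmetric parts $V_a,B_a$ survives (for $d=2$, $A=0$, $\Sigma_0=I$, $V=B=I+cJ$ with $J=\bigl(\begin{smallmatrix}0&-1\\1&0\end{smallmatrix}\bigr)$ and $c>1$, one gets $\Sigma(t)=\sigma(t)I$ with $\dot\sigma=2(1-c^2)\sigma^2<0$, so there is no blow-up), and this is harmless here only because the paper's $\succ0$ means symmetric positive definite.
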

\begin{proof}
    First, consider the case $\lambda_{\min}(A + A^\top) \geq 0$. Differentiating $\tr(\Sigma)$ yields
\begin{align}
    \tr(\dot{\Sigma}) 
    =&~ 2\tr(A\Sigma) + 2\tr(\Sigma V \Sigma B)\nonumber\\
    \ge&~ \lambda_{\min}(A+A^{\top})\tr(\Sigma)+ 2 \lambda_{\min}(V)\lambda_{\min}(B) \tr(\Sigma^2)\nonumber\\
    \ge&~ \lambda_{\min}(A+A^{\top})\tr(\Sigma)+ \frac{2\lambda_{\min}(V)\lambda_{\min}(B)}{d}(\tr(\Sigma))^2, \label{eq:TraceEst}
\end{align}
where the inequalities are due to the fact that $V$, $B$ and $\Sigma$ are all positive definite, and that $\tr(\Sigma^2) \geq \frac{1}{d}(\tr(\Sigma))^2$. Therefore, as $\tr(\Sigma)>0$ and $\lambda_{\min}(A+A^{\top})\geq0$, it suffices to consider the following differential equation
\begin{align}\label{eq:TraceBound}
    \dot{y}=\lambda_{\min}(A+A^{\top})y + \frac{2\lambda_{\min}(V)\lambda_{\min}(B)}{d}y^2, \quad y(0)=\tr(\Sigma_0)
\end{align}
and when $\lambda_{\min}(A+A^{\top})>0$ solving it yields an explicit solution
$$
y(t) = \frac{\alpha y_0 e^{\alpha t}}{\alpha - \beta y_0 (e^{\alpha t} - 1)}
$$
where $\alpha=\lambda_{\min}(A+A^{\top})$, $\beta=\frac{2\lambda_{\min}(V)\lambda_{\min}(B)}{d}$. Therefore the blow-up time of $\tr(\Sigma)$ is shorter than that of $y$, namely,
$T_f\le \frac{1}{\alpha} \log\left( 1 + \frac{\alpha}{\beta y_0} \right)$, 
which is the estimate \eqref{eq:blowuptime}. On the other hand, if $\alpha=0$, then solving \eqref{eq:TraceBound} yields $y(t)=\frac{y_0}{1-y_0\beta t}$, confirming the finite-time blow-up of $\Sigma$.

Next, consider the case when $\lambda_{\min}(A + A^\top) < 0$. From \eqref{eq:TraceEst}\eqref{eq:TraceBound} one can see that when $\tr(\Sigma)>-\frac{ \lambda_{\min}(A+A^{\top})d}{2\lambda_{\min}(V)\lambda_{\min}(B)}$, $\tr(\Sigma)$ becomes monotonic with an increasing speed bounded from below by a positive constant, and hence blows up in finite time by the aforementioned analysis of the case $\lambda_{\min}(A + A^\top) > 0$.
\end{proof}
\begin{remark}
    Note that the boundedness of $\Sigma$ is determined by the sign of the quadratic term in \eqref{eq:gaussianTransformer}: when $V$ and $B$ have opposite signs of definiteness, by defining a Lyapunov function $\tr(\Sigma)$ we obtain the boundedness of $\Sigma$. Conversely, $\Sigma$ becomes unstable or blows up when $V$ and $B$ have the same sign of definiteness. From this argument, one can see that the proofs of \Cref{thm:sigmaConvGeneralV} and \Cref{prop:SigmaBlowUp} actually do not rely on the signs of $V$ and $B$, but on whether they coincide or not. 
\end{remark}

\subsection{Asymptotic mean matching}\label{ss:meanMatching}

We now study the ability of the Gaussian Transformer \eqref{eq:gaussianTransformer} to asymptotically match a prescribed mean. For $V = \eta I_d$, using the tools developed in \Cref{ss:VetaId} we show that, once the covariance converges, the mean can be steered to any desired target by suitable time-invariant parameters.

\begin{theorem}\label{thm:meanMatching}
Consider \eqref{eq:gaussianTransformer} with $A$ having the decomposition \eqref{eq:ADecomp}, suppose $V = \eta I_d$ and $B + B^\top\succ0$. 
\begin{enumerate}
    \item If $\eta< 0$, then $\lim\limits_{t\rightarrow\infty}\mu(t)=-(A+\eta\Sigma_{\infty}B+\eta I_d)^{-1}b$ where $\Sigma_{\infty}$ solves \eqref{eq:Bernoulli-Algebraic}.
    \item If $\eta> 0$ and $A_{a}$ exists, then $\|\mu(t)\|\to \infty$.
\end{enumerate}
\end{theorem}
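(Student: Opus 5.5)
The plan is to treat the mean equation as a linear non-autonomous system whose coefficient matrix converges, and to transfer the asymptotics of the limiting matrix to $\mu(t)$. With $V=\eta I_d$,
\[
\dot\mu = \bigl(A+\eta I_d+\eta\Sigma(t)B\bigr)\mu + b =: N(t)\mu + b .
\]
Since $B+B^\top\succ0$, the sign condition \eqref{eq:signOfetaB} holds exactly when $\eta\le 0$. For $\eta<0$, \Cref{thm:SigmaConvergenceVIdentity} applies directly and gives $\Sigma(t)\to\Sigma_\infty=U\,\mathrm{diag}(\Sigma^a_\infty,0)\,U^\top$. Moreover $\Sigma_\infty$ solves \eqref{eq:Bernoulli-Algebraic}, because it is a limit of the autonomous flow. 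Hence $N(t)\to N_\infty:=A+\eta I_d+\eta\Sigma_\infty B$. The convergence is exponentially fast, judging from the growth-rate estimates in the proof of \Cref{thm:SigmaConvergenceVIdentity}; at minimum we have $\int_0^\infty\|N(t)-N_\infty\|\,\dd t<\infty$, or at least $N(t)\to N_\infty$.

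The key step for item 1 is to show that $N_\infty$ is Hurwitz. First write $A+\eta\Sigma_\infty B$ in the $U$-basis. Its $(2,1)$ block is zero, since $\tilde A$ is block upper triangular and $\Sigma_\infty$ is supported on the first block. So the spectrum splits into two pieces.

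\emph{On the $A_a$ block.} The matrix is $A_a+\eta\Sigma^a_\infty \tilde B'_{11}$, where $\tilde B'=U^\top BU$. Its symmetric part with respect to $\Sigma^a_\infty$ vanishes. Indeed, \eqref{eq:ARE} gives $M\Sigma^a_\infty+\Sigma^a_\infty M^\top=0$, and the argument of \Cref{lem:zerospec} then puts its eigenvalues on the imaginary axis.

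\emph{On the $A_n$ block.} The matrix is simply $A_n$, whose eigenvalues satisfy $\Re\le0$.

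Adding the shift $\eta I_d$ with $\eta<0$ moves every eigenvalue strictly into the left half-plane. Therefore $N_\infty$ is Hurwitz and invertible.

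With $N_\infty$ Hurwitz, a standard perturbation argument finishes item 1. The system $\dot x=N(t)x$ is uniformly exponentially stable for large $t$, since $N(t)\to N_\infty$; one can use a Lyapunov function $x^\top Px$ with $N_\infty^\top P+PN_\infty=-I$. Setting $e=\mu-\mu_\infty$ with $\mu_\infty=-N_\infty^{-1}b$ gives
\[
\dot e=N(t)e+(N(t)-N_\infty)\mu_\infty ,
\]
and the forcing term tends to zero. Hence $e\to0$, which is the claimed limit $-(A+\eta\Sigma_\infty B+\eta I_d)^{-1}b$.

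For item 2 ($\eta>0$), the hypothesis \eqref{eq:signOfetaB} fails, so \Cref{thm:SigmaConvergenceVIdentity} is unavailable. Instead I would compare with the blow-up mechanism of \Cref{prop:SigmaBlowUp}. The $\Sigma$-equation has quadratic term $\eta(\Sigma B\Sigma+\Sigma B^\top\Sigma)$, with $B+B^\top\succ0$. Projecting onto the unstable subspace of $A_a$ and tracking $\tr(U_a^\top\Sigma U_a)$ should show that $\Sigma$ grows without bound, either blowing up in finite time or tending to infinity. Then $\eta\Sigma B$ pushes the mean along an expanding direction. Concretely, I would look at $\frac{\dd}{\dd t}\|\mu\|^2$ or a projected quantity $\mu^\top P\mu$ with $P$ built from $\Sigma^{-1}$ and $A_a$. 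The unstable part $A_a$ together with $+\eta I_d$ contributes strictly positive growth, while the quadratic $\Sigma$ term is nonnegative in the symmetric sense. A Gronwall-type lower bound then gives $\|\mu(t)\|\to\infty$ on the maximal interval of existence, for generic $\mu_0$ and $b$ (e.g.\ unless $\mu$ sits exactly at an unstable equilibrium).

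I expect the main obstacle to be item 2. The direction of growth of $\mu$ must be tied to $\Sigma$ without commutativity, and the degenerate initial data need care, so I would state it on the existence interval of $\Sigma$. A lesser obstacle in item 1 is rigorously obtaining the block-triangular spectral splitting of $A+\eta\Sigma_\infty B$ given the non-symmetric $B$.
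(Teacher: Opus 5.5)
Your treatment of item 1 is essentially the paper's argument. You use the error dynamics around $\mu_\infty=-N_\infty^{-1}b$, a Lyapunov function for the Hurwitz limit, and \Cref{lem:zerospec} on the $A_a$ block. Your use of the block of $U^\top BU$, rather than of the symmetrized $B$, is in fact the accurate version.

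The gap is in item 2. You are right that \Cref{thm:SigmaConvergenceVIdentity} is unavailable for $\eta>0$, since then $\eta(B+B^\top)\succ0$. The paper's proof nonetheless invokes the limit $\Sigma^a_\infty$ and the ``oscillatory'' block $A_a+\Sigma^a_\infty B_{11}$ there, so it does not give you a valid route either. However, your replacement does not work as stated, for two reasons.
\begin{enumerate}
\item In $\frac{\dd}{\dd t}\|\mu\|^2=2\mu^\top(A+\eta I_d)\mu+2\eta\,\mu^\top\Sigma B\mu+2\mu^\top b$, neither $\mu^\top A\mu$ nor $\mu^\top\Sigma B\mu$ has a sign. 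The symmetric part of $\Sigma B$ can be indefinite even when $\Sigma\succ0$ and $B\succ0$, so the step ``the quadratic $\Sigma$ term is nonnegative in the symmetric sense'' is false.
\item More fundamentally, for $\Sigma_0\succ0$ the covariance always blows up at a finite time $T_f$ in this regime. Any Gronwall bound driven by $A_a+\eta I_d$ therefore yields only a bounded factor on $[0,T_f)$. Divergence of $\mu$ can only come from the singular term $\eta\Sigma B\mu$, and you give no mechanism tying $\mu$ to the direction in which $\Sigma$ blows up.
\end{enumerate}

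The missing ideas are as follows. First, pass to $P=\Sigma^{-1}$, which solves the linear equation $\dot P=-A^\top P-PA-\eta(B+B^\top)$. For $Av=\lambda v$ with $\Re\lambda>0$ this gives $v^*P(t)v=e^{-2\Re\lambda\, t}v^*P_0v-\eta\,\frac{1-e^{-2\Re\lambda\, t}}{2\Re\lambda}\,v^*(B+B^\top)v$. This quantity becomes negative, so $P$ turns singular at some finite $T_f$ and $\|\Sigma\|\to\infty$.

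Second, link $\mu$ to the same transition matrix as in \Cref{lem:RankPreserv}. With $\dot\Phi=(A+\eta\Sigma B)\Phi$ and $\Phi(0)=I_d$, one has $\Sigma=\Phi\Sigma_0\Phi^\top$ and $\mu(t)=e^{\eta t}\Phi(t)\bigl[\mu_0+\int_0^te^{-\eta s}\Phi(s)^{-1}b\,\dd s\bigr]$. Equivalently, the exact identity $\frac{\dd}{\dd t}(\mu^\top\Sigma^{-1}\mu)=2\eta\,\mu^\top\Sigma^{-1}\mu+2\mu^\top\Sigma^{-1}b$ holds. This is the $\Sigma^{-1}$-weighted quantity you were reaching for, and it shows the Mahalanobis norm stays bounded up to $T_f$. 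Growth of $\|\mu\|$ must therefore come from $\Sigma^{-1}$ degenerating.

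Third, one checks that $\Phi(t)^{-1}$, which satisfies $\Phi^{-\top}\Sigma_0^{-1}\Phi^{-1}=\Sigma^{-1}$, stays bounded and converges to a singular matrix $G_*$ as $t\uparrow T_f$. Hence $\|\mu(t)\|\to\infty$ as $t\uparrow T_f$ whenever the limit of the bracket lies outside $\mathrm{range}(G_*)$.

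This also shows that item 2 cannot hold for all data: $\mu_0=0$ and $b=0$ give $\mu\equiv0$. Your ``generic, on the maximal existence interval'' qualification is therefore necessary, but the proposal as written does not actually prove even that weakened claim.
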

\begin{proof}
    Define $M^\eta (t)\coloneqq A+\eta\Sigma(t) B$. Then, the resulting dynamics of $\mu$ is
    \begin{align}\label{eq:muDynamics}
        \dot{\mu}=(M^\eta (t)+\eta I_d)\mu+b,
    \end{align}
    where $\lim\limits_{t\rightarrow\infty}\Re(\spec(M^\eta(t)))\leq 0$. Indeed, by \Cref{thm:SigmaConvergenceVIdentity}, we have
    $$
    M_\infty^\eta \coloneqq \lim\limits_{t\rightarrow\infty}M^\eta(t)=\lim\limits_{t\rightarrow\infty}A+\eta\Sigma(t) B=U\begin{pmatrix}
        A_a+\Sigma_{\infty}^{a}B_{11} & A_{12}\\
        0 & A_n
    \end{pmatrix}U^{\top},
    $$
    where $U$ is an orthogonal matrix, $\Re(\spec(A_a))> 0$ and $\Re(\spec(A_n))\leq 0$. On the other hand, by \Cref{lem:zerospec}, $\Re(\spec(A_a+\Sigma^a_{\infty}B_{11}))=0$ due to the fact that $\Sigma_{\infty}^a$ solves \eqref{eq:ARE} which is a special case of \eqref{eq:Bernoulli-Algebraic}. Hence, we have $\Re(\spec(M_\infty^\eta)) \leq 0$.
    
    Define $\mu_{\infty}:=-(M_{\infty}^{\eta}+\eta I_d)^{-1}b$. Calculating the error dynamics yields
    \begin{align}\label{eq:muError}
        \frac{\dd}{\dd t}(\mu(t)-\mu_{\infty})= (M^{\eta}(t) +\eta I_d) (\mu(t)-\mu_{\infty}) + (M^{\eta}(t)-M^{\eta}_{\infty})\mu_{\infty}.
    \end{align}
    When $\eta<0$, choose the Lyapunov function as $\mathcal{V}(x):=x^{\top}Px$ where $P\succ 0$ solves the Lyapunov equation $(M_{\infty}^{\eta}+\eta I_d)^{\top}P+P(M_{\infty}^{\eta}+\eta I_d)=-I_d$, then one can use the fact that $M_\infty^\eta = \lim\limits_{t\rightarrow\infty}M^\eta(t)$ to show that the driftless dynamics $\dot{x}=(M^{\eta}(t)+\eta I_d)x$ exponentially decay to zero. Finally by applying Duhamel's principle to \eqref{eq:muError} and using the fact that $\lim\limits_{t\rightarrow\infty}M^{\eta}(t)-M_{\infty}^{\eta}=0$, we have $\lim\limits_{t\rightarrow\infty}\mu(t)=\mu_{\infty}$.

    On the other hand, if $\eta>0$ and there exists an eigenvalue of $A$ with positive real part, then $\eta I_d$ plus the purely oscillatory component $A_a+\Sigma_{\infty}^aB_{11}$ in $M_{\infty}^{\eta}$ contributes to an unstable force in the dynamics \eqref{eq:muDynamics}, driving $\mu(t)$ to infinity exponentially.
\end{proof}
\begin{remark}[On the role of non-zero bias]
In the special case $b=0$, the mean dynamics reduces to a homogeneous linear system. Thus, by \Cref{thm:meanMatching}, when $V = \eta I_d$, $\mu(t)\to0$ or $\mu(t)\to \infty$ as $t\to\infty$ for any  $A$ and any initial condition $\mu_0$. Hence a nonzero bias term $b$ is essential for asymptotic non-zero mean matching.
\end{remark}

\begin{remark}[On the definite $V$ case]
    Similarly, one can weaken the assumptions on $V$ by leveraging the results in \Cref{ss:generalV}. Indeed, combining the convergence of the covariance (\Cref{thm:sigmaConvGeneralV}) with \Cref{lem:zerospec}, we obtain that whenever $\Sigma(t)\to \Sigma_\infty$, the long-time behavior of $\mu(t)$ is determined by the spectrum of this limiting matrix, and more specifically, by the stability of $V$ plus a purely oscillatory component $A+V\Sigma_{\infty}B$.
\end{remark}

\section{Numerical experiments}\label{s:numerics}

In this section, we examine two aspects of the Gaussian Transformer dynamics \eqref{eq:gaussianTransformer}. First, we validate the asymptotic mean and covariance behavior predicted in \Cref{sec:asymptoticDynamics} in a controlled
two-dimensional setting. Second, we test whether an analogous Gaussian moment structure is visible in pretrained Transformer models when their inputs are sampled from a Gaussian distribution. Codes to reproduce our results can be found in \url{https://github.com/DCN-FAU-AvH/gaussianTransformers}.

\subsection{Validation of asymptotic results}

We begin with a two-dimensional example illustrating the asymptotic behavior of \Cref{eq:gaussianTransformer} described in \Cref{sec:asymptoticDynamics}. We fix $B \succ 0$ and $V = - I_d$, and consider three choices of $A$: one with $\Re(\spec(A))>0$, one with $\Re(\spec(A))\leq 0$, and one whose eigenvalues have real parts of mixed sign.
We set $\mu_0=(-1,-1)^\top$, $\Sigma_0=
    \left(\begin{smallmatrix}
        0.4 & 0.2\\
        0.2 & 0.2
    \end{smallmatrix}\right)$,
and prescribed limiting mean as $\mu_\infty=(1,2)^\top$. In each case, the bias term $b$ is chosen according to the mean-matching condition
of \Cref{thm:meanMatching}, so that $\mu_\infty$ is an equilibrium for the mean
dynamics.
\begin{figure}[h]
    \centering
    \includegraphics[
        width=0.85\textwidth,
        trim={0cm 0.5cm 0cm 0cm},
        clip
    ]{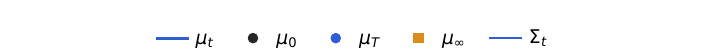}

    \begin{subfigure}{0.3\linewidth}
        \centering
        \includegraphics[width=\linewidth]{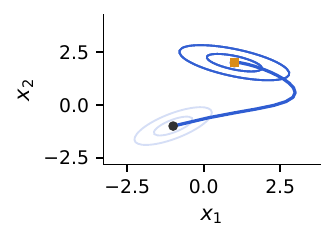}
        \caption{$\Re(\spec(A))>0$}
    \end{subfigure}\hfill
    \begin{subfigure}{0.3\linewidth}
        \centering
        \includegraphics[width=\linewidth]{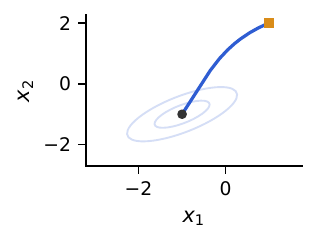}
        \caption{$\Re(\spec(A))\leq 0$}
    \end{subfigure}\hfill
    \begin{subfigure}{0.3\linewidth}
        \centering
        \includegraphics[width=\linewidth]{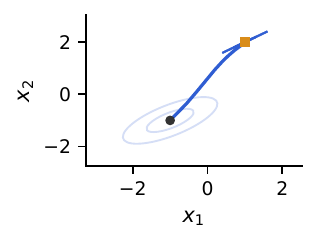}
        \caption{mixed sign}
    \end{subfigure}
    \caption{
    Asymptotic dynamics of \Cref{eq:gaussianTransformer} with $B\succ 0$ and
    $V = -I_d$. The bias is chosen so that $\mu_\infty=(1,2)^\top$ is the limiting mean.}
    \label{fig:meanMatching}
\end{figure}

The resulting trajectories are shown in \Cref{fig:meanMatching}. In all three regimes, the mean converges to the prescribed target by time $T=100$. The covariance behavior depends on the spectral regime of $A$, as predicted by \Cref{thm:SigmaConvergenceVIdentity}. For $\Re(\spec(A))>0$, the covariance remains nondegenerate and converges toward a finite limiting shape. When $\Re(\spec(A))\leq 0$, the covariance collapses. For the mixed-sign case, the evolution is anisotropic: contraction occurs only along the stable directions. 

\subsection{Gaussian structure in pretrained Transformers}

We next examine whether Gaussian moment structure persists in pretrained Transformers. For a model with embedding dimension $d$, we draw independent input tokens $x_i^{(0)}\sim \mathcal N(0,I_d)$, propagate them through the layers, and denote by $\rho^{(\ell)}$ their empirical
distribution at layer $\ell$. Further, let $\gamma^{(\ell)} = \mathcal N\left(\mu^{(\ell)},\Sigma^{(\ell)}\right)$ be the Gaussian distribution with the same empirical mean and covariance as $\rho^{(\ell)}$.
\begin{figure}[t]
\centering
\begin{subfigure}{.49\textwidth}
    \centering
    \includegraphics[width=\textwidth]{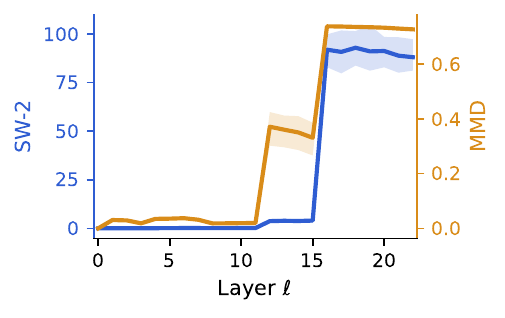}
    \caption{ModernBERT}
    \label{fig:momentMatched-modernBERT}
\end{subfigure}
\hfill
\begin{subfigure}{.49\textwidth}
    \centering
    \includegraphics[width=\textwidth]{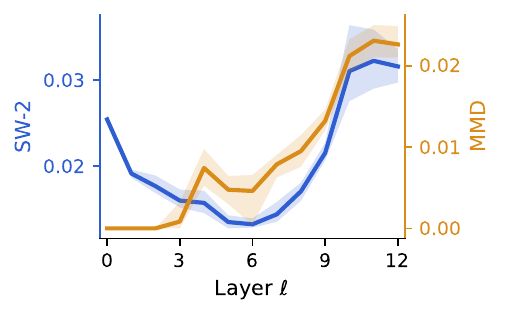}
    \caption{Tiny-DeiT}
    \label{fig:momentMatched-tinyDeiT}
\end{subfigure}
\caption{Distances between the empirical distribution $\rho^{(\ell)}$ and its moment-matched Gaussian approximation $\gamma^{(\ell)}$. Curves show the mean across $20$ batches of $n=8192$ tokens, and shaded regions indicate the empirical $0.1$--$0.9$ quantile band.}
\label{fig:momentMatched}
\end{figure}
\begin{figure}[h]
    \centering
        \includegraphics[width=1\linewidth]{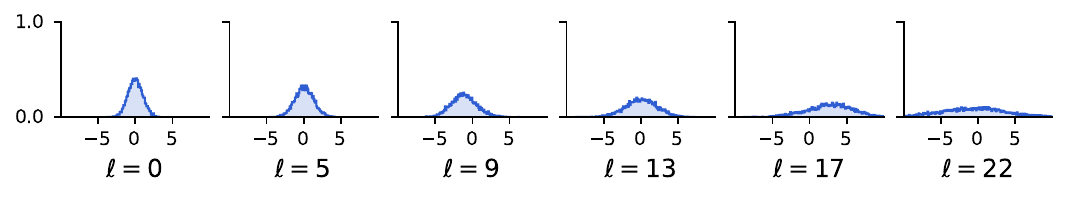}
        \includegraphics[width=1\linewidth]{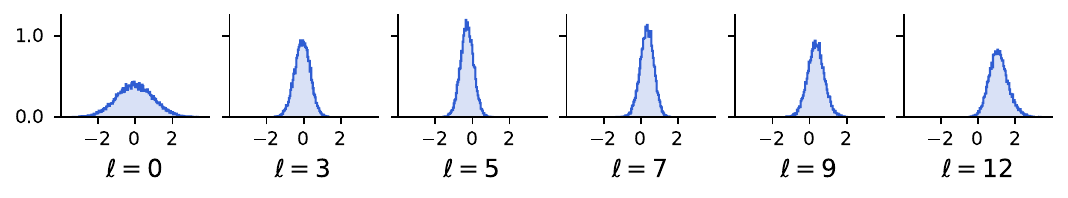}    
    \caption{Marginal distributions across layers of ModernBERT (top row) Tiny-DeiT (bottom row). Each panel shows the coordinate whose variance is closest to the median at that layer, using $n=8\,192$ Gaussian
    input tokens.}
    \label{fig:repCoord}
\end{figure}
We compare $\rho^{(\ell)}$ with $\gamma^{(\ell)}$ using the sliced Wasserstein distance (SW-2):
$$
\mathrm{SW}_2(\rho^{(\ell)},\gamma^{(\ell)})
=
\left(
\mathbb{E}_{\theta}
W_2^2\!\left(
\langle \theta,x^{(\ell)}\rangle,
\langle \theta,z^{(\ell)}\rangle
\right)
\right)^{\frac{1}{2}},
$$
where $x^{(\ell)}\sim \rho^{(\ell)}$ and
$z^{(\ell)}\sim\gamma^{(\ell)}$, and the maximum mean discrepancy (MMD):
\[
\mathrm{MMD}^2(\rho^{(\ell)},\gamma^{(\ell)})
=
\mathbb{E}_{x,x'\sim\rho^{(\ell)}}[k(x,x')]
+
\mathbb{E}_{z,z'\sim\gamma^{(\ell)}}[k(z,z')]
-
2\mathbb{E}_{x\sim\rho^{(\ell)},\,z\sim\gamma^{(\ell)}}[k(x,z)],
\]
where $k$ is the Gaussian kernel. For our experiments, we use ModernBERT~\cite{warner2025smarter}, a pretrained language encoder with $L=22$ layers and $d=768$, and Tiny-DeiT \cite{pmlr-v139-touvron21a}, a compact vision Transformer with $L=12$ layers and $d=192$. The resulting distances across layers are shown in \Cref{fig:momentMatched}. For both architectures, the distance to the moment-matched Gaussian remains comparatively small through the early and intermediate layers. The discrepancies increase in deeper layers, indicating that non-Gaussian behavior abruptly emerges at later stages of the network.

We further assess Gaussian preservation in pretrained models by visualizing one-dimensional marginals. At each selected layer, we choose the coordinate whose marginal variance is closest to the median. As shown in \Cref{fig:repCoord}, the two architectures behave qualitatively differently. In ModernBERT, the marginal broadens with depth and becomes progressively flatter. In contrast, Tiny-DeiT maintains a more Gaussian-like profile across layers, with only modest changes in spread. 

Finally, we test whether the covariance regimes predicted by \Cref{thm:sigmaConvGeneralV} appear in a realistic setting. Starting from Tiny-DeiT, we remove normalization layers and replace self-attention with residual single-head attention maps using prescribed matrices $V$ and $B$, while leaving the nonlinear feed-forward blocks unchanged. For Gaussian inputs, we track the empirical mean and covariance trace.

\Cref{fig:modifiedTinyDeiT} shows two distinct regimes: opposing signs of $V$ and $B$ keep the covariance trace bounded, whereas jointly positive signs cause rapid growth. Thus, the sign structure in \Cref{thm:sigmaConvGeneralV} predicts qualitative covariance behavior even in a discrete residual architecture with nonlinear feed-forward blocks.
\begin{figure}
\centering
\begin{subfigure}{.49\textwidth}
    \centering
    \includegraphics[width=\textwidth]{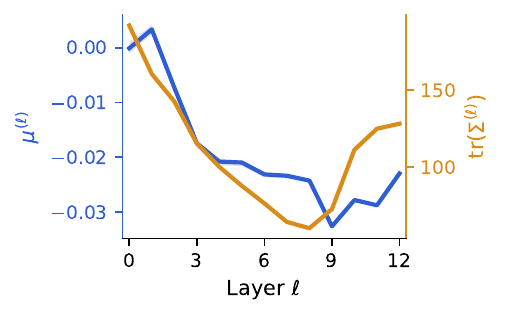}
    \caption{$V\prec 0$, $B \succ 0$}
    \label{fig:modifiedTinyDeiT-covariance-opposite}
\end{subfigure}
\hfill
\begin{subfigure}{.49\textwidth}
    \centering
    \includegraphics[width=\textwidth]{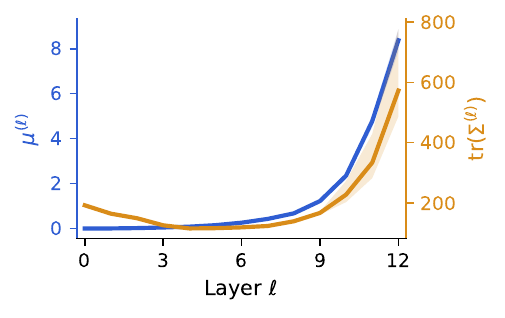}
    \caption{$V\succ 0$, $B \succ 0$}
    \label{fig:modifiedTinyDeiT-covariance-same}
\end{subfigure}
\caption{Evolution of the empirical mean and covariance trace in modified Tiny-DeiT blocks. Each curve reports the average over 20 independent batches, each containing $n=8192$ tokens initialized from a standard Gaussian distribution. Shaded regions indicate the empirical $0.1$--$0.9$ quantile band.}
\label{fig:modifiedTinyDeiT}
\end{figure}

\section{Conclusions}\label{sec:conclusions}

In this work, we connected modern Transformer architectures with classical control theory by analyzing a mean-field formulation in the Gaussian regime. We proved that, for self-attention with affine feed-forward layers, Gaussian measures are preserved by the induced flow. This invariance reduces the nonlocal transport PDE on probability measures to a finite-dimensional bilinear control system for the mean and covariance.

Our characterization of mean and covariance dynamics provides a control theoretic interpretation of neural network expressivity. Finite-time reachability yields a minimal interpolation property within the invariant class of Gaussian measures with fixed covariance rank, while the asymptotic analysis identifies parameter regimes leading either to stable covariance dynamics or to finite-time blow-up.

At the model level, covariance instabilities correspond to divergence of token values during forward propagation and therefore suggest possible numerical failure modes. The experiments support this picture: although exact Gaussian invariance is not expected in trained encoder Transformers, Gaussian moment structure remains sufficiently persistent in early and intermediate layers for the reduced dynamics to capture relevant qualitative behavior.

Several questions remain open. First, the long-time analysis relies on structural assumptions on the Transformer parameter matrices, including sign and symmetry conditions on $B$ and $V$. Extending the theory beyond these regimes is challenging because the resulting Bernoulli-type matrix equations may exhibit non-normal behavior. Second, the present framework treats a simplified architecture: it focuses on single Gaussian input distributions, omits layer normalization, and uses affine feed-forward layers to preserve Gaussian closure. The restriction to a single Gaussian is essential: for Gaussian mixtures, the exponential factors produced by the different components no longer cancel, so the attention field is no longer affine in $x$, and the closed Bernoulli system for the mean and covariance no longer applies. Consequently, finite Gaussian mixtures do not form an invariant class under the Transformer flow, and no exact finite-dimensional closure analogous to \Cref{prop:gaussianTransformer} should be expected. 

A further perspective is to study the infinite inverse-temperature limit within the Gaussian setting, obtained by replacing $B$ in the attention weights with $\beta B$ and letting $\beta \to \infty$. For every finite $\beta$, the mean and covariance still satisfy a Bernoulli-type system, with the quadratic terms scaled by $\beta$. The limit formally corresponds to hardmax attention, but this is not directly well posed on the unbounded support of Gaussians. Whether a meaningful interpretation survives after truncation, renormalization, or projection onto Gaussian moments remains an interesting open problem. More generally, extensions to Gaussian mixtures, multi-head attention, normalization mechanisms, nonlinear feed-forward layers, and singular attention limits would require a more delicate analysis, likely based on projected or approximate moment-closure methods rather than exact Gaussian closure.

Overall, the Gaussian Transformer shows that expressivity and forward-pass stability can be studied jointly through controlled mean--covariance dynamics. This perspective offers a principled route toward the design and analysis of stable, controllable, and mathematically grounded Transformer architectures.

\appendix
\crefalias{section}{appendix}

\section{Proofs of technical results}
This appendix contains proofs of auxiliary results. Throughout, we omit the domains of integration to ease readability.

\subsection{Proof of \texorpdfstring{\Cref{lem:subGaussian}}{Lemma 2.2}}\label{app:lemProof}

\begin{proof}
    Let $T^*:=\sup\{T\ge 0:\ E_t\le 2E_0 \text{ for all } t\in[0,T]\}$. We shall show $T^* > 0$. 
    
    Define $\Lambda_t(\xi) = \log \int e^{\xi^\top y} \dd\rho_t(y)$ as the cumulant generating function of $\rho_t$. Then, it holds that $\mathcal{A}_{\rho_t}(x) = V \nabla_\xi \Lambda_t(Bx)$. As $E_t \leq 2 E_0$, by Young's inequality we have
    \begin{equation}\label{eq:linearGrowthBound}
    \Lambda_t(\xi) \le \log \left( e^{\frac{1}{4\kappa_0}|\xi|^2} \int e^{\kappa_0|y|^2} \dd\rho_t(y)\right) \le \log\left(e^{\frac{1}{4\kappa_0}|\xi|^2} \cdot 2E_0 \right)\le  \frac{1}{4\kappa_0}|\xi|^2 + \log(2E_0)
    \end{equation}
    for all $t\in[0,T^*]$. Note that \eqref{eq:linearGrowthBound} implies that the moment generating function of $\rho_t$ is finite on all of $\mathbb{R}^d$, hence $\Lambda_t$ is well defined and smooth in $\xi$, and
    $$
    \nabla \Lambda_t(\xi)
    =
    \frac{\int y\,e^{\xi^\top y}\,\dd \rho_t(y)}{\int e^{\xi^\top y}\,\dd\rho_t(y)}.
    $$
    Since $\Lambda_t$ is convex and has at most quadratic growth, there exist constants $C_1,C_2>0$ such that
    $\|\nabla \Lambda_t(\xi)\|\le C_1|\xi|+C_2$
    for all $t\in[0,T^*]$. It follows that
    $$
    \|\mathcal A[{\rho_t}](t,x)\|
    =
    \|V\nabla \Lambda_t(Bx)\|
    \le \|V\|\bigl(C_1\|B\||x|+C_2\bigr).
    $$
    Since $\sigma=\mathrm{ReLU}$ is globally Lipschitz and has linear growth, the velocity field
    $u_t(x):=\sigma(Ax+b)+\mathcal A[\rho_t](t,x)$
    is locally Lipschitz in $x$ and satisfies
    \begin{equation}\label{eq:linearGrowth_u}
            \|u_t(x)\|\le K_1|x|+K_2
    \end{equation}
    for all $t\in[0,T^*]$ and some constants $K_1,K_2>0$. Therefore, by the Picard--Lindel\"{o}f existence-uniqueness theorem for ODEs \cite{Teschl2012ODE}, for each $x_0\in\mathbb R^d$ the characteristic equation
    $$
    \dot\Phi_t(x_0)=u_t(\Phi_t(x_0)),\qquad \Phi_0(x_0)=x_0,
    $$
    admits a unique solution on $[0,T^*]$, and the linear-growth bound \eqref{eq:linearGrowth_u} prevents finite-time blow-up. Since $\rho_t$ solves the continuity equation \eqref{eq:transformerPDE} with velocity field $u_t$, the method of characteristics yields $\rho_t=(\Phi_t)_\#\rho_0$ where $(\Phi_t)_\#$ is the pushforward of the flow map $\Phi_t$. Consequently,
    \begin{align}\label{eq:subGaussianInt}
    E_t=\int e^{\kappa_0 |\Phi_t(x_0)|^2}\,\dd\rho_0(x_0).
    \end{align}
    Thus, for a.e. $t\in[0,T^*]$, 
    \begin{align}\label{eq:EstPhi}
        \frac{\dd}{\dd t} |\Phi_t(x_0)|\le\|\sigma(A\Phi_t(x_0)+b)+\mathcal{A}[\rho_t](t,\Phi_t(x_0))\| \le K_1 |\Phi_t(x_0)| + K_2.
    \end{align}
    Multiplying by  $e^{-K_1 t}$ we have 
    $\frac{\dd}{\dd t} \left( |\Phi_t(x_0)| e^{-K_1 t} \right) \le K_2 e^{-K_1 t}$,
    and integrating both sides from $0$ to $t$ yields
    $|\Phi_t(x_0)| \le |x_0| e^{K_1 t} + \frac{K_2}{K_1} \left( e^{K_1 t} - 1 \right)$,
    therefore, for any $\epsilon>0$,
    $$
    |\Phi_t(x_0)|^2 \le (1+\epsilon) e^{2K_1 t} |x_0|^2 + \left(1+\frac{1}{\epsilon}\right) \left( \frac{K_2}{K_1} (e^{K_1 t} - 1) \right)^2.
    $$
    Then substituting it into \eqref{eq:subGaussianInt} we have
    \begin{align}\label{eq:subGaussianEst}
        E_t \le \exp \left(\kappa_0\left( 1+\tfrac{1}{\epsilon} \right) \left( \tfrac{K_2}{K_1} (e^{K_1 t} - 1) \right)^2\right)  \int \exp\left( \kappa_0 (1+\epsilon) e^{2K_1 t} |x_0|^2 \right) \dd\rho_0(x_0).
    \end{align}
    Since $\rho_0=\mathcal N(\mu_0,\Sigma_0)$ and $\kappa_0<\frac{1}{2\lambda_{\max}(\Sigma_0)}$, $\int e^{\alpha |x_0|^2}\,\dd\rho_0(x_0)$ is finite for every $\alpha<\frac{1}{2\lambda_{\max}(\Sigma_0)}$, and depends continuously on $\alpha$. Hence one can choose $\epsilon>0$ and then $\tau>0$ sufficiently small so that
    $\kappa_0(1+\epsilon)e^{2K_1 t}<\frac{1}{2\lambda_{\max}(\Sigma_0)}$
     for all $t\in[0,\tau]$,
    and the right-hand side of \eqref{eq:subGaussianEst} is bounded by $\frac{3}{2}E_0$ for all $t\in[0,\tau]$. Therefore $E_t\le 2E_0$ on $[0,\tau]$, so $T^*\ge \tau>0$.
\end{proof}

\subsection{Proof of \texorpdfstring{\Cref{prop:GaussianPreserv}}{Proposition 2.3}}\label{app:proProof}

\begin{proof}
    Consider $X_t\sim{\rho}_t$, $Y_t\sim\nu_t$ on $\R^d$. Then we have
    \begin{align*}
    \dot{X}_t=\sigma(AX_t+b)+\mathcal{A}[\rho_t](t, X_t), \quad\quad
        \dot{Y}_t=AY_t+b+\mathcal{A}[\nu_t](t, Y_t)
    \end{align*}
    Let $e_t = X_t - Y_t$ be the discrepancy and use the shorthand notation $\mathcal{A}[\rho_t](t, X_t) = \mathcal{A}_{\rho_t}(X_t)$ for readability. Differentiation yields
    \begin{align}\label{eq:DiscrDiff}
        \frac{\dd}{\dd t}{e}_t = \eta(AX_t+b) + A e_t + \left( \mathcal{A}_{\rho_t}(X_t) - \mathcal{A}_{\nu_t}(Y_t) \right),
    \end{align}
    where $\eta(x) =\sigma(x)-x= \max\{0, -x\}$. We shall bound the term
    \begin{align*}
        \mathcal{A}_{\rho_t}(X_t) - \mathcal{A}_{\nu_t}(Y_t)=\mathcal{A}_{\rho_t}(X_t) - \mathcal{A}_{{\nu}_t}(X_t)+\mathcal{A}_{{\nu}_t}(X_t) - \mathcal{A}_{\nu_t}(Y_t)
    \end{align*}    
    by calculating the spatial and measure variation, respectively. For the spatial variation we use that $\nu_t$ is Gaussian and the expression \eqref{eq:SAonlyVectorField}, hence
    $\mathcal{A}_{\nu_t}(X_t) - \mathcal{A}_{\nu_t}(Y_t) = V\Sigma_t B (X_t - Y_t)$.
    Taking the $L^2$ norm directly yields the spatial bound:
    $$
    \|\mathcal{A}_{\nu_t}(X_t) - \mathcal{A}_{\nu_t}(Y_t)\|_{L^2} \le \|V\Sigma_t B\| \|X_t - Y_t\|_{L^2}.
    $$
    As for the measure variation, we define for any density $\varrho$ on $\R^d$ two functions $N_\varrho(x): = \int y e^{y^\top B x} \dd\varrho(y)$, $Z_\varrho(x): = \int e^{y^\top B x} \dd\varrho(y)$. Then $\mathcal{A}_{\varrho}(x)=V\frac{N_{\varrho}(x)}{Z_{\varrho}(x)}$. We have
    \begin{align}\label{eq:SAdifference}
        \mathcal{A}_{\rho_t}(x) - \mathcal{A}_{\nu_t}(x) = \frac{V}{Z_{\rho_t}(x)} \left(N_{\rho_t}(x) - N_{\nu_t}(x)\right) + \mathcal{A}_{{\rho}_t}(x) \frac{Z_{\nu_t}(x) - Z_{\rho_t}(x)}{Z_{\rho_t}(x)}.
    \end{align}
    Let $\gamma_t$ be the optimal coupling of $\rho_t$ and $\nu_t$ \cite{villani2009optimal}, which is a probability distribution on $\R^{2d}$ satisfying $\iint \|y-z\|^2 \dd\gamma_t = W_2^2(\rho_t, \nu_t)$. Then we have
    \begin{align*}
        |Z_{\rho_t}&(x) - Z_{\nu_t}(x)| 
        \le \iint \Big( \|B\| \|x\| \|y-z\| \Big) \Big( e^{y^\top B x} + e^{z^\top B x} \Big) \dd\gamma_t(y,z)\\
        &\le\|B\| \|x\| \left( \iint \|y-z\|^2 \dd\gamma_t(y,z) \right)^{\frac{1}{2}} \left( \iint \left( e^{y^\top B x} + e^{z^\top B x} \right)^2 \dd\gamma_t(y,z) \right)^{\frac{1}{2}}\\
        &\le \|B\| \|x\| W_2(\rho_t, \nu_t) \left( 2\int e^{2y^\top B x} \dd\rho_t(y) + 2\int e^{2z^\top B x} \dd\nu_t(z) \right)^{\frac{1}{2}}.
    \end{align*}
    On the other hand, we have 
    \begin{align*}
    &\|N_{\rho_t}(x) - N_{\nu_t}(x)\|
    \le \iint \|y-z\| e^{y^\top B x} \dd\gamma_t
    + \iint \|z\| \left| e^{y^\top B x} - e^{z^\top B x} \right| \dd\gamma_t\\
    &\le
    \left( \iint \|y-z\|^2 \dd\gamma_t \right)^{\frac{1}{2}}
    \left( \iint e^{2y^\top B x} \dd\gamma_t \right)^{\frac{1}{2}} \\
    &\quad
    + \iint \|z\| \Big( \|B\| \|x\| \|y-z\| \Big)
    \Big( e^{y^\top B x} + e^{z^\top B x} \Big) \dd\gamma_t\\
    &\le
    W_2(\rho_t, \nu_t)
    \left( \int e^{2y^\top B x} \dd\rho_t(y) \right)^{\frac{1}{2}} \\
    &\quad
    + \|B\| \|x\| W_2(\rho_t, \nu_t)
    \left(
    \iint 2\|z\|^2 e^{2y^\top B x} \dd\gamma_t
    + \iint 2\|z\|^2 e^{2z^\top B x} \dd\gamma_t
    \right)^{\frac{1}{2}}.
    \end{align*}
    Additionally, by Jensen's inequality, $
    Z_{\rho_t}(x) \ge \exp\left( \int y^\top B x \, \dd\rho_t(y) \right) \ge \exp\left( \mu_\rho^\top B x \right), 
    $ 
    where $\mu_\rho = \int y \, \dd\rho_t(y)$. Substituting the above estimates of $Z_{\rho_t}(x)-Z_{\nu_t}(x)$, $N_{\rho_t}(x)-N_{\nu_t}(x)$ and $\frac{1}{Z_{\rho_t}(x)} \le \exp\left( -\mu_\rho^\top B x \right)$ into \eqref{eq:SAdifference}, we obtain
    \begin{align*}
        \|\mathcal{A}_{\rho_t}(x) - \mathcal{A}_{\nu_t}(x)\|\le& \frac{\|V\|}{Z_{\rho_t}(x)} \left( \|N_{\rho_t}(x) - N_{\nu_t}(x)\| + \frac{\|N_{\nu_t}(x)\|}{Z_{\nu_t}(x)} |Z_{\rho_t}(x) - Z_{\nu_t}(x)| \right)\\
        \le& {\|V\|}\Big[ I_t^1 + I_t^2 + I_t^3 \Big] W_2(\rho_t, \nu_t)
    \end{align*}
    where 
    \begin{align*}
        I_t^1&=\left( \int e^{2(y-\mu_{\rho})^\top B x} \dd\rho_t(y) \right)^{\frac{1}{2}},\\
        I_t^2&=\|B\| \|x\| \left( \iint 2\|z\|^2 e^{2(y-\mu_{\rho})^\top B x} \dd\gamma_t + \iint 2\|z\|^2 e^{2(z-\mu_{\rho})^\top B x} \dd\gamma_t \right)^{\frac{1}{2}},\\
        I_t^3&= \|B\| \|x\|  \left( 2\int e^{2(y-\mu_{\rho})^\top B x} \dd\rho_t + 2\int e^{2(z-\mu_{\rho})^\top B x} \dd\nu_t \right)^{\frac{1}{2}}(\|\mu_t\| + \|\Sigma_t B\| \|x\|).
    \end{align*}
     We shall show that the $L^2(\rho_t)$ norm of $I_t^1+I_t^2+I_t^3$ is finite within $[0,T^*]$. 
     Since $z\sim \nu_t$, terms involving $\int \|z\|^2 e^{c\|z\|} \dd\nu_t(z)$, for $c > 0$, are bounded over $t\in[0,T^*]$, and polynomials and exponentials of $x$ integrated against $\rho_t(x)$ are also finite over $t\in[0,T^*]$ due to the sub-Gaussianity proved in \Cref{lem:subGaussian}. Thus, by the algebraic inequality $(a+b+c)^2 \le 3(a^2 + b^2 + c^2)$, it suffices to consider the worst-case integrals
    \begin{align*}
        J^1_t=&\int  \|x\|^2 \iint \|z\|^2 e^{2(y-\mu_\rho)^\top B x} \dd\gamma(y,z) \dd\rho_t(x) \\
        =& \iiint \|x\|^2 \|z\|^2 e^{2(y-\mu_\rho)^\top B x} \dd\gamma(y,z) \dd\rho_t(x),\\
    J^2_t=&\int \|x\|^4 \int e^{2(y-\mu_\rho)^\top B x} \dd\rho_t(y) \dd\rho_t(x) = \iint \|x\|^4 e^{2(y-\mu_\rho)^\top B x} \dd\rho_t(y) \dd\rho_t(x)
    \end{align*}
    and show that they are both bounded within a short time. By Fubini's theorem and the Cauchy--Schwarz inequality, 
    \begin{align*}
        J_t^1 \le & {\left( \int \|x\|^2 e^{\|B\| \|x\|^2} \dd\rho_t(x) \right)}  {\left( \iint \|z\|^2 e^{\|B\| \|y-\mu_\rho\|^2} \dd\gamma_t(y,z) \right)},\\
        J_t^2 \le &  {\left( \int \|x\|^4 e^{\|B\| \|x\|^2} \dd\rho_t(x) \right)} {\left( \int e^{\|B\| \|y-\mu_\rho\|^2} \dd\rho_t(y) \right)}.
    \end{align*}
    Since $\|B\|<\kappa_0$, by \Cref{lem:subGaussian}, there exists a constant $c_0$ such that
    \begin{multline*}
            \max\Big\{\left( \int\|x\|^2 e^{\|B\| \|x\|^2} \dd\rho_t(x) \right), \\
            \left(\int \|x\|^4 e^{\|B\| \|x\|^2} \dd\rho_t(x)\right),
            \left( \int e^{\|B\| \|y-\mu_\rho\|^2} \dd\rho_t(y) \right)\Big\}\le c_0
    \end{multline*}
    for all $t\in[0,T^*]$. As for the second part of $J^1_t$, we have
    \begin{align*}
        \iint \|z\|^2 e^{\|B\| \|y-\mu_\rho\|^2} \dd\gamma_t(y,z) \le  \left( \int \|z\|^4 \dd\nu_t(z) \right)^{\frac{1}{2}} \left( \int e^{2\|B\| \|y-\mu_\rho\|^2} \dd\rho_t(y) \right)^{\frac{1}{2}}\\
        \le\left( \int \|z\|^4 \dd\nu_t(z) \right)^{\frac{1}{2}} \left( e^{4\|B\| \|\mu_\rho\|^2} \int e^{4\|B\| \|y\|^2} \dd\rho_t(y)\right)^{\frac{1}{2}},
    \end{align*}
    which is finite over $t\in[0,T^*]$ thanks to \Cref{lem:subGaussian}.
    Summarizing the above argument, we have
    $\|\mathcal{A}_{\rho_t}(x) - \mathcal{A}_{\nu_t}(x)\|\le L_{\rho}(t) W_2(\rho_t,\nu_t)$
    where $L_{\rho}(t)$ is bounded over $t\in[0,T^*]$. Hence by \eqref{eq:DiscrDiff}, we have
    $$
    \frac{\dd}{\dd t} \|e_t\|_{L^2} \le \Big( \|A\| + \|V\Sigma_tB\| + L_\rho(t)  \Big) \|e_t\|_{L^2} + \|\eta(AX_t+b)\|_{L^2(\rho_t)}.
    $$
    Since $L_{\rho}(t)$ and $\|V\Sigma_tB\|$ are uniformly bounded over $t\in[0,T^*]$, by Gr\"{o}wall's lemma, there exists $K>0$ such that
    \begin{align}\label{eq:Distance}
        W_2(\rho_t, \nu_t) \le \|e_t\|_{L^2} \le \int_0^t \|\eta(AX_s+b)\|_{L^2(\rho_s)} e^{K(t-s)} \dd s.
    \end{align}
    By the definition of the pushforward measure, we have
    \begin{align*}
        \|\eta(AX_s+b)\|_{L^2(\rho_s)} =& \|\eta(A \Phi_s(X_0) + b)\|_{L^2(\rho_0)}\\
        \le& \|\eta(AX_0 + b)\|_{L^2(\rho_0)} + \|A\| \|\Phi_s(X_0) - X_0\|_{L^2(\rho_0)}.
    \end{align*}
    As we have established in \eqref{eq:EstPhi} that 
    $\frac{\dd}{\dd t} |\Phi_t(x_0)| \le K_1 |\Phi_t(x_0)| + K_2$,
    it follows that 
    $$
    |\Phi_s(X_0) - X_0| \le (K_1 |X_0| + K_2) \int_0^s e^{K_1 r} \dd r \le (K_1 |X_0| + K_2) s e^{K_1 s}.
    $$
    Therefore we have $\|\eta(AX_s+b)\|_{L^2(\rho_s)} \le \|\eta(AX_0 + b)\|_{L^2(\rho_0)} + C_0 s e^{K_1 s}$, where $C_0=\|A\|(K_1\sqrt{\|\mu_0\|^2+\tr(\Sigma_0)}+K_2)$. Substituting into \eqref{eq:Distance}, yields
    \begin{align*}
        W_2(\rho_t, \nu_t) \le &\int_0^t \Big[ \|\eta(AX_0 + b)\|_{L^2(\rho_0)} + C_0 s e^{K_1 s} \Big] e^{K(t-s)} \dd s\\
        \le & \|\eta(AX_0 + b)\|_{L^2(\rho_0)} \left( \frac{e^{Kt} - 1}{K} \right)+ C_0 e^{Kt} \int_0^t s e^{(K_1 - K)s} \dd s\\
        \le &~t \cdot \left\| \eta(AX_0 + b)\right\|_{L^2(\rho_0)} + \mathcal{O}(t^2)
    \end{align*}
    for all $t\in[0,T^*]$, which completes the proof.
\end{proof}

\subsection{Proof of \texorpdfstring{\Cref{thm:sigmaConvGeneralV}}{Theorem 4.5}}\label{app:longProof}

\begin{proof}
We split the proof in two cases, depending on the sign of $\Re(\spec(A))$. 

\paragraph{Case $\Re(\spec(A)) > 0$} We first show the existence of an equilibrium of the $\Sigma$ equation in \eqref{eq:gaussianTransformer} (i.e., the solution of \eqref{eq:Bernoulli-Algebraic}) using the Poincar\'{e}-Hopf theorem (see \cite[Chapter 7]{milnor1997topology}). More precisely, we construct a domain $\mathcal{K}\subset\R^{d \times d}$ making sure that the vector field defining the $\Sigma$ equation in \eqref{eq:gaussianTransformer} is transversal to $\partial\mathcal{K}$, and then show the existence of equilibria inside $\mathcal{K}$ by a topological argument.

We begin by constructing the domain $\mathcal{K}$ using hyperplanes defined via Lyapunov functions. Calculating the trace of the derivative of $\Sigma(t)$ yields
$
\tr(\dot{\Sigma}) = \tr(A\Sigma + \Sigma A^\top) + \tr(V\Sigma B \Sigma + \Sigma B^\top \Sigma V^\top)= 2\tr(A\Sigma) + 2\tr(\Sigma V \Sigma B).
$ 
We shall compare the growth of the two terms in this equation to give a sign to $\tr(\dot{\Sigma})$ when the norm of $\Sigma$ is large. On the one hand, there exists $C_1\geq 0$ such that $\tr(A\Sigma) \leq  C_1 \|\Sigma\|_F$ where $\|\cdot\|_{F}$ is the Frobenius norm of square matrices. Meanwhile, since $\Sigma V\Sigma\prec0$ (by Sylvester's law of inertia) and $B\succ0$, we have $\tr(\Sigma V \Sigma B)<0$, and further by applying von Neumann's trace inequality (refer to, for example, \cite[Theorem 4.3.53]{horn2012matrix}), we have
\begin{align*}
    \tr(\Sigma V \Sigma B)\leq  \lambda_{\min}(B)\tr(\Sigma V \Sigma)
    \leq \lambda_{\min}(B)\lambda_{\max}(V)\tr(\Sigma^2)
    \leq -C_2\|\Sigma\|^2_F
\end{align*}
where $C_2 > 0$. Therefore, since $\Sigma\succ0$ by \Cref{lem:RankPreserv}, there exists a constant $R>0$ such that if $\tr(\Sigma)=R$, then $\tr(\dot{\Sigma})<0$. Next, denote $x\coloneqq \mathrm{vec}(\Sigma)$. Consider a boundary function $U(\Sigma) = x^\top P x = \text{vec}(\Sigma)^\top P \text{vec}(\Sigma)$ where $P\succ0$ solves the Lyapunov equation 
\begin{align*}
    P(I_d \otimes A + A \otimes I_d)+(I_d \otimes A + A \otimes I_d)^{\top}P=I_{d^2}.
\end{align*}
The existence of $P$ is guaranteed by the fact that $\Re(\spec(A)) > 0$. Then, differentiating the boundary function yields 
\begin{multline*}
    \dot{U} = x^\top P(I_d \otimes A + A \otimes I_d)+(I_d \otimes A + A \otimes I_d)^{\top}Px \\ + 2 x^\top P \text{vec}(\mathcal{N}(\Sigma))=\|x\|^2+2 x^\top P \text{vec}(\mathcal{N}(\Sigma)),
\end{multline*}
where $\mathcal{N}(\Sigma):=V\Sigma B \Sigma + \Sigma B^\top \Sigma V^\top$ is the quadratic nonlinearity. Therefore, there exists $\varepsilon>0$ such that when $U(\Sigma)=\varepsilon$, we have $\dot{U}=\langle \nabla U,\mathrm{vec}(\dot{\Sigma})\rangle>0$. Summarizing the above arguments, we construct the set $\mathcal{K}\coloneqq \{\Sigma\succ0\:|\:U(\Sigma)\geq \varepsilon, \tr(\Sigma)\leq  R\}$. Note that $\mathcal{K}$ is a compact convex set in the set of $d\times d$ positive definite matrices (and hence in $\R^{d(d+1)/ {2}}$) with nonempty interior. Moreover, on the boundary of $\mathcal{K}$, the $\Sigma$ vector field $\mathcal{V}(\Sigma)\coloneqq A\Sigma + \Sigma A^\top + V\Sigma B\Sigma + \Sigma B^\top \Sigma V^\top$ is transversal to the boundary $\partial\mathcal{K}$ and points towards the interior of $\mathcal{K}$ at every point on the boundary, due to the above calculation of the derivatives of $U(\Sigma)$ and $\tr(\Sigma)$. Therefore, by the Poincar\'{e}-Hopf theorem, we have 
\begin{align}\label{6.00}
    \sum_{\{\Sigma^* \in \mathcal{K}\:|\: \mathcal{V}(\Sigma^*) = 0\}} \mathrm{ind}(\Sigma^*) =(-1)^\frac{d(d+1)}{2}\chi(\mathcal{K})= (-1)^\frac{d(d+1)}{2},
\end{align}
where the last equality is due to the fact that any compact, convex subset of $\mathbb{R}^N$ ($N>0$) with non-empty interior is homeomorphic to the closed unit ball $\mathbb{D}^N$, whose Euler characteristic $\chi$ is $1$. 

The non-zeroness of \eqref{6.00} implies that inside $\mathcal{K}$ there exists at least one equilibrium of the vector field $\mathcal{V}$, guaranteeing that \eqref{eq:Bernoulli-Algebraic} has at least one positive definite solution.

Let $\Sigma_{\infty}\succ0$ be one solution of \eqref{eq:Bernoulli-Algebraic}. We shall prove the local stability of $\Sigma_{\infty}$ under the condition that $\Sigma_\infty^{-1}V+V\Sigma_\infty^{-1}\prec0$, by showing that locally the flow of equation of $\Sigma$ \eqref{eq:gaussianTransformer} converges to the solution of \eqref{eq:Bernoulli-Algebraic}. Define $\Delta(t):=\Sigma(t)-\Sigma_{\infty}$, then
\begin{align}\label{6.4}
    \dot{\Delta}=M_\infty\Delta+\Delta M_\infty^{\top}+V\Delta B \Sigma_{\infty}+\Sigma_{\infty} B^{\top}\Delta V^{\top}+V\Delta B \Delta+\Delta B^{\top}\Delta V^{\top},
\end{align}
where $M_\infty := A + V\Sigma_\infty B$. Write $P := \Sigma_\infty^{-1} \succ 0$, and define the quadratic functional $\Phi(\Delta) := \frac12 \tr(\Delta P \Delta P) \equiv \frac{1}{2} \|P^{\frac{1}{2}} \Delta P^{\frac{1}{2}} \|_F^2$. It holds that
\begin{equation}\label{eq:norm-equivalence}
\frac12 \lambda_{\min}(P)^2 \|\Delta\|_F^2
\;\le\;
\Phi(\Delta)
\;\le\;
\frac12 \lambda_{\max}(P)^2 \|\Delta\|_F^2.
\end{equation}
We compute the derivative of $\Phi$ along solutions of \eqref{6.4}. Using the cyclic property of the trace, we obtain $\dot{\Phi}(\Delta) = \operatorname{tr}(\Delta P \dot{\Delta} P)$. Substituting \eqref{6.4}, we decompose $\dot{\Phi} = \mathrm{I} + \mathrm{II} + \mathrm{III}$, where
\begin{align*}
\mathrm{I}
&:= \tr \bigl(\Delta P (M_\infty \Delta + \Delta M_\infty^\top) P\bigr),\\
\mathrm{II}
&:= \tr \bigl(\Delta P (V\Delta B\Sigma_\infty + \Sigma_\infty B \Delta V) P\bigr),\\
\mathrm{III}
&:= \tr\bigl(\Delta P (V\Delta B\Delta + \Delta B\Delta V) P\bigr).
\end{align*}
Using cyclicity of the trace, we have $\mathrm{I} = \tr\bigl(\Delta P \Delta (P M_\infty + M_\infty^\top P)\bigr)$. Multiplying \eqref{eq:Bernoulli-Algebraic} on the left and right by $P$ yields $P M_\infty + M_\infty^\top P = 0$, hence $\mathrm{I} = 0$. Further, using $\Sigma_\infty P = P \Sigma_\infty = I$, we obtain
\begin{align*}
\mathrm{II}=\operatorname{tr}(\Delta P V \Delta B)+\operatorname{tr}(\Delta B \Delta V P)=
\operatorname{tr}\bigl(B \Delta (P V + V P) \Delta\bigr).
\end{align*}
By assumption, we have that $P V + V P \prec 0$. 
Set $S := -(P V + V P) \succ 0$. Then
$\mathrm{II}=
-\operatorname{tr}(B \Delta S \Delta)$.
Since $B \succ 0$ and $S \succ 0$, we have
\[
\operatorname{tr}(B \Delta S \Delta)
=
\|B^{\frac{1}{2}} \Delta S^{\frac{1}{2}}\|_F^2
\;\ge\;
\lambda_{\min}(B)\lambda_{\min}(S)\|\Delta\|_F^2.
\]
Therefore by \eqref{eq:norm-equivalence}, there exists $c>0$ such that
$\mathrm{II}
\le
- c\, \Phi(\Delta)$.
As for the cubic term $\mathrm{III}$, by \eqref{eq:norm-equivalence}, there exists $C_1>0$ such that
$|\mathrm{III}|
\le
C_1 \Phi(\Delta)^{\frac{3}{2}}$. Combining the above estimates, we obtain $\dot{\Phi}(\Delta)
\le
- c \Phi(\Delta) + C_1 \Phi(\Delta)^{\frac{3}{2}}$. Hence, there exists $r>0$ such that if $\Phi(\Delta) \le r$, then
$\dot{\Phi}(\Delta)
\le
-\frac{c}{2} \Phi(\Delta)$.
This implies exponential decay of $\Phi(\Delta(t))$ for all initial data sufficiently close to $\Sigma_\infty$. Using \eqref{eq:norm-equivalence} yields $\|\Sigma(t) - \Sigma_\infty\|_F
\le
C_2 e^{-\gamma t} \|\Sigma(0) - \Sigma_\infty\|_F
$ for some $C_2,\gamma>0$, proving this case. 

\paragraph{Case $\Re(\spec(A)) \leq 0$} We note that $\Sigma_\infty = 0$ is always a solution of \eqref{eq:Bernoulli-Algebraic}. Next, we show its global or local stability depending on the sign of $\lambda_{\max}(A+A^\top)$. Calculating the derivative of $\tr(\Sigma)$ yields
\begin{align}
    \tr(\dot{\Sigma})
    \le \lambda_{\max}(A+A^{\top})\tr(\Sigma)+ 2 \lambda_{\max}(V)\lambda_{\max}(B) 
    \tr(\Sigma^2). \label{eq:DiffSigma}
\end{align}
Hence if $\lambda_{\max}(A+A^\top)<0$ and $V\prec0$, $B\succ0$, then $\tr(\dot{\Sigma})
\le\lambda_{\max}(A+A^\top) \tr(\Sigma)$, and by Gr\"{o}nwall's Lemma, $\Sigma(t)$ converges to $0$ exponentially. If $\lambda_{\max}(A+A^\top)=0$, then \eqref{eq:DiffSigma} yields $\tr(\dot{\Sigma})\le 2 \lambda_{\max}(V)\lambda_{\max}(B) \tr(\Sigma^2)$, which is strictly negative as long as $\Sigma\neq0$, hence assuring that $\lim_{t\rightarrow\infty}\Sigma(t)=0$. 

If, instead, $\Re(\spec(A))  < 0$ and $\lambda_{\max}(A+A^\top)>0$, by first-order linearization of the dynamics of $\Sigma$ in \eqref{eq:gaussianTransformer}, $0$ is a locally stable equilibrium. Further, $\Sigma$ remains uniformly bounded over $t\in(0,+\infty)$ as we have the estimate
$$
    \tr(\dot{\Sigma})
    \le\lambda_{\max}(A+A^{\top})\tr(\Sigma)+\frac{2}{d} \lambda_{\max}(V)\lambda_{\max}(B) \tr(\Sigma)^2
    < 0
$$
when $\tr(\Sigma)> - \frac{\lambda_{\max}(A+A^\top)}{\frac{2}{d}\lambda_{\max}(V)\lambda_{\max}(B)}>0$. Hence completes the proof.
\end{proof}

\section*{Acknowledgments}
The authors thank the Speinshart Scientific Center for AI and SuperTech for its hospitality, where part of this research was conducted.

\bibliographystyle{siamplain}
\bibliography{references}
\end{document}